\title{Query Complexity of Clustering with Side Information}
\author{Arya Mazumdar}
\author{Barna Saha}\thanks{ The authors are with College of Information and Computer Sciences, University of Massachusetts Amherst, emails: \url{arya@cs.umass.edu},\url{barna@cs.umass.edu}.This work is partially supported by NSF awards CCF 1642658,  CCF 1642550,  CCF 1464310, a Yahoo ACE Award and a Google Faculty Research Award.}
\renewcommand\subsubsection{\@startsection{subsubsection}{2}%
  \z@{.5\linespacing\@plus.7\linespacing}{-.5em}%
  {\normalfont\bfseries}}
\newcommand\nc\newcommand
\nc\bfa{{\boldsymbol a}}\nc\bfA{{\boldsymbol A}}\nc\cA{{\mathcal A}}
\nc\bfb{{\boldsymbol b}}\nc\bfB{{\boldsymbol B}}\nc\cB{{\mathcal B}}
\nc\bfc{{\boldsymbol c}}\nc\bfC{{\boldsymbol C}}\nc\cC{{\mathcal C}}
\nc\sC{{\mathscr C}}
\nc\bfd{{\boldsymbol d}}\nc\bfD{{\boldsymbol D}}\nc\cD{{\mathcal D}}
\nc\bfe{{\boldsymbol e}}\nc\bfE{{\boldsymbol E}}\nc\cE{{\mathcal E}}
\nc\bff{{\boldsymbol f}}\nc\bfF{{\boldsymbol F}}\nc\cF{{\mathcal F}}
\nc\bfg{{\boldsymbol g}}\nc\bfG{{\boldsymbol G}}\nc\cG{{\mathcal G}}
\nc\bfh{{\boldsymbol h}}\nc\bfH{{\boldsymbol H}}\nc\cH{{\mathcal H}}
\nc\bfi{{\boldsymbol i}}\nc\bfI{{\boldsymbol I}}\nc\cI{{\mathcal I}}
\nc\bfj{{\boldsymbol j}}\nc\bfJ{{\boldsymbol J}}\nc\cJ{{\mathcal J}}
\nc\bfk{{\boldsymbol k}}\nc\bfK{{\boldsymbol K}}\nc\cK{{\mathcal K}}
\nc\bfl{{\boldsymbol l}}\nc\bfL{{\boldsymbol L}}\nc\cL{{\mathcal L}}
\nc\bfm{{\boldsymbol m}}\nc\bfM{{\boldsymbol M}}\nc\sM{{\mathscr M}}\nc\cM{{\mathcal M}}
\nc\bfn{{\boldsymbol n}}\nc\bfN{{\boldsymbol N}}\nc\cN{{\mathcal N}}
\nc\bfo{{\boldsymbol o}}\nc\bfO{{\boldsymbol O}}\nc\cO{{\mathcal O}}
\nc\bfp{{\boldsymbol p}}\nc\bfP{{\boldsymbol P}}\nc\cP{{\mathcal P}}
\nc\bfq{{\boldsymbol q}}\nc\bfQ{{\boldsymbol Q}}\nc\cQ{{\mathcal Q}}
\nc\bfr{{\boldsymbol r}}\nc\bfR{{\boldsymbol R}}\nc\cR{{\mathcal R}}
\nc\bfs{{\boldsymbol s}}\nc\bfS{{\boldsymbol S}}\nc\cS{{\mathcal S}}
\nc\bft{{\boldsymbol t}}\nc\bfT{{\boldsymbol T}}\nc\cT{{\mathcal T}}
\nc\bfu{{\boldsymbol u}}\nc\bfU{{\boldsymbol U}}\nc\cU{{\mathcal U}}
\nc\bfv{{\boldsymbol v}}\nc\bfV{{\boldsymbol V}}\nc\cV{{\mathcal V}}
\nc\bfw{{\boldsymbol w}}\nc\bfW{{\boldsymbol W}}\nc\cW{{\mathcal W}}
\nc\bfx{{\boldsymbol x}}\nc\bfX{{\boldsymbol X}}\nc\cX{{\mathcal X}}
\nc\bfy{{\boldsymbol y}}\nc\bfY{{\boldsymbol Y}}\nc\cY{{\mathcal Y}}
\nc\bfz{{\boldsymbol z}}\nc\bfZ{{\boldsymbol Z}}\nc\cZ{{\mathcal Z}}
\nc\diff{{\mathrm d}}
\nc\e{{\mathrm e}}
\nc\calC{{\mathcal C}}
\newcommand{\remove}[1]{}
\newcommand{\avg}{{\mathbb E}}
\newtheorem*{lemma*}{Lemma}
\newtheorem{corollary}{Corollary}
\newtheorem{theorem}{Theorem}
\newtheorem{lemma}{Lemma}
\theoremstyle{definition}
\theoremstyle{corollaryn}
\newtheorem{observation}[theorem]{Observation}
\newtheorem{problem}{Problem}
\newtheorem{remark}{Remark}
\newcommand{\cc}{{\sf Query-Cluster}}
\def\DEBUG{true}
  \def\rem#1{{\marginpar{\raggedright\scriptsize #1}}}
  \newcommand{\barnr}[1]{\rem{\textcolor{red}{$\bullet$ #1}}}
  \newcommand{\aryar}[1]{\rem{\textcolor{green}{$\bullet$ #1}}}
  \newcommand{\barnr}[1]{}
  \newcommand{\aryar}[1]{}
\begin{document}

\maketitle

\begin{abstract}

Suppose, we are given a set of $n$ elements to be clustered into $k$ (unknown) clusters, and an oracle/expert labeler that can interactively answer pair-wise queries of the form, ``do two elements $u$ and $v$ belong to the same cluster?''. The goal is to recover the optimum clustering by asking the minimum number of queries. 
In this paper, we initiate a rigorous theoretical study of this basic problem of query complexity of interactive clustering, and provide strong information theoretic lower bounds, as well as nearly matching upper bounds. Most clustering problems come with a similarity matrix, which is used by an automated process to cluster similar points together. Nevertheless, obtaining an ideal similarity function is extremely challenging due to ambiguity in data representation, poor data quality etc., and this is one of the primary reasons that makes clustering hard. To improve accuracy of clustering, a fruitful approach in recent years has been to ask a domain expert or crowd to obtain labeled data interactively. Many heuristics have been proposed, and all of these use a similarity function to come up with a querying strategy. However, there is no systematic theoretical study.

 Our main contribution in this paper is to show the dramatic power of side information aka similarity matrix on reducing the query complexity of clustering. A similarity matrix represents noisy pair-wise relationships such as one computed by some  function on attributes of the elements. A natural noisy model is where similarity values are drawn independently from some arbitrary probability distribution $f_+$ when the underlying pair of elements belong to the same cluster, and from some $f_-$ otherwise. We show that given such a similarity matrix, the query complexity reduces drastically from $\Theta(nk)$ (no similarity matrix) to $O(\frac{k^2\log{n}}{\cH^2(f_+\|f_-)})$ where $\cH^2$ denotes the squared Hellinger divergence. Moreover, this is also information-theoretic optimal within an $O(\log{n})$ factor. Our algorithms are all efficient, and parameter free, i.e., they work without any knowledge of $k, f_+$ and $f_-$, and only depend logarithmically with $n$. Our lower bounds could be of independent interest, and provide a general framework for proving lower bounds for classification problems in the interactive setting. Along the way, our work also reveals intriguing connection to popular community detection models such as the {\em stochastic block model}, significantly generalizes them, and opens up many venues for interesting future research.
 \end{abstract}

\section{Introduction}

Clustering is one of the most fundamental and popular methods for data classification. 
In this paper we initiate a rigorous theoretical study of {\em clustering with the help of an oracle}, a model that saw a recent surge of popular  heuristic algorithms. \let\thefootnote\relax\footnotetext{A prior version of this work appeared in arxiv previously~\cite{mazumdar2016clustering}, see \url{https://arxiv.org/abs/1604.01839}. This paper contains a new efficient Monte Carlo algorithm that has not appeared before, and a stronger lower bound. Some proofs have been rewritten for clarity.}

Suppose we are given a set of $n$ points,  that need to be clustered into $k$ clusters where $k$ is unknown to us. Suppose there is an oracle that either knows the true underlying clustering or can compute the best clustering under some optimization constraints.
We are allowed  to query the oracle whether any two points belong to the same cluster or not. 
How many  such queries are needed to be asked at minimum to perform the clustering exactly? The motivation to this problem lies at the heart of modern machine learning applications where the goal is to facilitate more accurate learning from less data by interactively asking for labeled data, e.g., active learning and crowdsourcing. Specifically, automated clustering algorithms that rely just on a similarity matrix often return inaccurate results. Whereas, obtaining few labeled data adaptively can help in significantly improving its accuracy. Coupled with this observation, clustering with an oracle has generated tremendous interest in the last few years with increasing number of heuristics developed for this purpose \cite{gokhale2014corleone,vesdapunt2014crowdsourcing,dkmr:14,wang2012crowder,wang2013leveraging,fss:16,DBLP:conf/icde/VerroiosG15, dalvi2013aggregating,ghosh2011moderates,
karger2011iterative}. The number of queries is a natural measure of ``efficiency'' here, as it directly relates to the amount of labeled data or the cost of using crowd workers --however, theoretical guarantees on query complexity is lacking in the literature. 

On the theoretical side, query complexity or the decision tree complexity is a classical model of computation that has been extensively studied for different problems \cite{frpu:94,akss:86, bg:90}. For the clustering problem, one can obtain an upper bound of $O(nk)$ on the query complexity easily and it is achievable even when $k$ is unknown \cite{vesdapunt2014crowdsourcing,dkmr:14}:  to cluster an element at any stage of the algorithm, ask one query per existing cluster with this element (this is sufficient due to transitivity), and
start a new cluster if all queries are negative.
It turns out that $\Omega(nk)$ is also a lower bound, even for randomized algorithms (see, e.g., \cite{dkmr:14}). In contrast, the heuristics developed in practice often ask significantly less queries than $nk$. What could be a possible reason for this deviation between the theory and practice? 

Before delving into this question, let us look at a motivating application that drives this work.

\vspace{0.1in}
\noindent {\bf A Motivating Application: Entity Resolution.} Entity resolution (ER, also known as record linkage) is a fundamental problem in data mining and has been studied since 1969 \cite{fellegi1969theory}. The goal of ER is to identify and link/group different manifestations of the same real world object, e.g., different ways of addressing (names, email address, Facebook accounts) the same person, Web pages with different descriptions of the same business, different photos of the same object etc. (see the excellent survey by Getoor and Machanavajjhala \cite{getoor2012entity}). However, lack of an ideal similarity function to compare objects makes ER an extremely challenging task. For example, DBLP, the popular computer science bibliography dataset is filled with ER errors \cite{kopcke2010evaluation}. It is common for DBLP to merge publication records of different persons if they share similar attributes (e.g. same name), or split the publication record of a single person due to slight difference in representation (e.g. Marcus Weldon vs Marcus K. Weldon). In recent years, a popular trend to improve ER accuracy has been to incorporate human wisdom. The works of \cite{wang2012crowder,wang2013leveraging,vesdapunt2014crowdsourcing} (and many subsequent works) use a computer-generated similarity matrix to come up with a collection of pair-wise questions that are asked interactively to a crowd. The goal is to minimize the number of queries to the crowd while maximizing the accuracy. This is analogous to our interactive clustering framework. But intriguingly, as shown by extensive experiments on various real datasets, these heuristics use far less queries than $nk$ \cite{wang2012crowder,wang2013leveraging,vesdapunt2014crowdsourcing}--barring the $\Omega(nk)$ theoretical lower bound. On a close scrutiny, we find that all of these heuristics use some computer generated similarity matrix to guide in selecting the queries. Could these similarity matrices, aka side information, be the reason behind the deviation and significant reduction in query complexity?

Let us call this {\em clustering using side information}, where the clustering algorithm has access to a similarity matrix. This can be generated directly from the raw data (e.g., by applying Jaccard similarity on the attributes), or using a crude classifier which is trained on a very small set of labelled samples. Let us assume the following generative model of side information: a noisy weighted upper-triangular similarity matrix $W=\{w_{i,j}\}$, $1\le i< j \le n$, where $w_{i,j}$ is drawn from a probability distribution $f_+$ if $i,j, i \ne j,$ belong to the same cluster, and else from $f_{-}$. However, the algorithm designer is given only the similarity matrix without any information on $f_+$ and $f_-$. In this work, one of our major contributions is to show the separation in query complexity of clustering with and without such side information. Indeed the recent works of \cite{fss:16,aaai:17} analyze popular heuristic algorithms of \cite{vesdapunt2014crowdsourcing,wang2013leveraging} where the probability distributions are obtained from real datasets which show that these heuristics are significantly suboptimal even for very simple distributions. To the best of our knowledge, before this work, there existed no algorithm that works for arbitrary unknown distributions $f_+$ and $f_-$ with near-optimal performances. We develop a generic framework for proving information theoretic lower bounds for interactive clustering using side information, and design efficient algorithms for arbitrary $f_+$ and $f_-$ that nearly match the lower bound. Moreover, our algorithms are parameter free, that is they work without any knowledge of $f_+$, $f_-$ or $k$.

\vspace{0.1in}
\noindent{\bf Connection to popular community detection models.} The model of side information considered in this paper is a direct and significant generalization of the {\em planted partition model}, also known as the stochastic block model (SBM) \cite{holland1983stochastic, dyer1989solution, decelle2011asymptotic, DBLP:conf/focs/AbbeS15,abh:16,DBLP:conf/colt/HajekWX15,hajek2016achieving,chin2015stochastic,mossel2015consistency}.
The stochastic block model is an extremely well-studied model of random graphs which is used for modeling communities in real world, and is a special case of a similarity matrix we consider. In SBM, two vertices within the same community share an edge with probability $p$, and two vertices in different communities share an edge with probability $q$, that is $f_+$ is ${\rm Bernoulli}(p)$ and $f_-$ is ${\rm Bernoulli}(q)$. 
 It is often assumed that $k$, the number of communities, is a constant (e.g. $k=2$ is known as the {\em planted bisection model} and is studied extensively \cite{abh:16,mossel2015consistency,dyer1989solution} or a slowly growing function of $n$ (e.g. $k=o(\log{n})$). The points are assigned to clusters according to a probability distribution indicating the relative sizes of the clusters.  In contrast, not only in our model $f_+$ and $f_-$ can be arbitrary probability mass functions (pmfs), we do not have to make any assumption on $k$ or the cluster size distribution, and can allow for any partitioning of the set of elements (i.e., adversarial setting).  Moreover, $f_+$ and $f_-$ are unknown. For SBM, parameter free algorithms are known relatively recently for constant number of linear sized clusters \cite{abbe2015recovering,hajek2016achieving}.

 There are extensive literature that characterize the  threshold phenomenon in SBM in terms of  $p$ and $q$  for exact and approximate recovery of clusters when relative cluster sizes are known and nearly balanced (e.g., see \cite{DBLP:conf/focs/AbbeS15} and therein for many references). For $k=2$ and equal sized clusters, sharp thresholds are derived in \cite{abh:16,mossel2015consistency} for a specific sparse region of $p$ and $q$ \footnote{Most recent works consider the region of interest as $p=1-\frac{a\log{n}}{n}$ and $q=\frac{b\log{n}}{n}$ for some $a> b >0$.}.   In a more general setting, the vertices in the $i$th  and the $j$th communities are connected with probability $q_{ij}$ and threshold results for the sparse region has been derived in \cite{DBLP:conf/focs/AbbeS15} - our model can be allowed to have this as a special case when we have  pmfs $f_{i,j}$s denoting the  distributions 
of the corresponding random variables. If an oracle gives us some of the pairwise binary relations between elements (whether they belong to  the same cluster or not), the threshold of SBM must also change. But by what amount? This connection to SBM could be of independent interest to study query complexity of interactive clustering with side information, and our work opens up many possibilities for future direction.


 Developing lower bounds in the interactive setting appears to be significantly challenging, as algorithms may choose to get any deterministic information adaptively by querying, and standard lower bounding techniques based on Fano-type inequalities \cite{CGT:12,Chen:14} do not apply. One of our major contributions in this paper is to provide a general framework for proving information-theoretic lower bound for interactive clustering algorithms which holds even for randomized algorithms, and even with the full knowledge of $f_+, f_-$ and $k$. In contrast, our algorithms are computationally efficient and are parameter free (works without knowing $f_+,f_-$ and $k$). The technique that we introduce for our upper bounds could be useful for designing further parameter free algorithms which are extremely important in practice.
 
 \vspace{0.1in}
\noindent{\bf Other Related works.}
The interactive framework of clustering model has been studied before
where the oracle is given the entire clustering and the oracle can answer whether a cluster needs to be split
or two clusters must be merged \cite{balcan2008clustering,ABV:14}. Here we contain our attention to pair-wise queries, as in all practical applications that motivate this work \cite{wang2012crowder,wang2013leveraging,gokhale2014corleone,vesdapunt2014crowdsourcing}.  In most cases,  an expert human or crowd serves as an oracle. Due to the scale of the data, it is often not possible for such an oracle to answer queries on large number of input data. Only recently, some heuristic algorithms with $k$-wise queries for small values of $k$ but $k>2$ have been proposed in \cite{DBLP:conf/icde/VerroiosG15}, and a non-interactive algorithm that selects random triangle queries have been analyzed in \cite{vinayak2016crowdsourced}. Perhaps conceptually closest to us is a recent work by Asthiani et al. \cite{ashtiani2016clustering}, that was done independently of ours and appeared subsequent to a previous version of this work~\cite{mazumdar2016clustering}. In  \cite{ashtiani2016clustering}, pair-wise queries for clustering is considered. However, their setting is very different. They consider the specific NP-hard $k$-means objective with distance matrix which must be a metric and must satisfy a deterministic separation property. Their lower bounds are computational and not information theoretic; moreover their algorithm must know the parameters. There exists a significant gap between their lower and upper bounds:$\sim \log{k}$ vs $k^2$, and it would be interesting if our techniques can be applied to improve this.

Here we have assumed the oracle always returns the correct answer. To deal with the possibility that the crowdsourced oracle may 
give wrong answers, there are simple majority voting mechanisms or more
complicated techniques~\cite{DBLP:conf/icde/VerroiosG15, dalvi2013aggregating,ghosh2011moderates,
karger2011iterative,chen2016community,vinayak2016crowdsourced} to handle such errors. If we assume the errors are independent-since answers are collected from independent crowdworkers, then we can simply ask each query $O(\log{n})$ times and take the majority vote as the correct answer according to the Chernoff bound. Here our main objective is to study the power of side information, and we do not consider the more complex scenarios of handling erroneous oracle answers. 

\vspace{0.1in}
\noindent{\bf Contributions.} 
Formally the problem we study in this paper can be described as follows.
\begin{problem}[\cc~with an Oracle]
Consider a set of elements $V\equiv[n]$ with $k$ latent clusters $V_i$, $i =1, \dots, k$, where $k$ is unknown. There is an oracle $\mathcal{O}: V\times V \to \{\pm1\},$ that when queried with a pair of elements $u,v \in V \times V$,  returns $+1$  iff $u$ and $v$ belong to the same cluster, and $-1$ iff $u$ and $v$ belong to different clusters. The queries $Q \subseteq V \times V $ can be done adaptively. 
 Consider the side information $W = \{w_{u,v}: 1 \le u < v \le n\}$,  where the $(u,v)$th entry of  $W$, $w_{u,v}$ is a  random variable drawn from a discrete probability distribution $f_+$ if $u,v$ belong to the same cluster, and is drawn from a discrete\footnote{our lower bound holds  for continuous distributions as well.} probability distribution $f_{-}$\footnote{for simplicity of expression, we treat the sample space to be of constant size. However, all our results extend to any finite sample space  scaling linearly with its size.} if $u,v$ belong to different clusters. The parameters $k, f_{+}$ and $f_{-}$ are unknown.
Given $V$ and $W$, find $Q \subseteq V \times V$ such that $|Q|$ is minimum, and from  the oracle answers and $W$  it is possible to recover $V_i$, $i=1,2,...,k$.
\end{problem}

Without side information, as noted earlier, it is easy to see an algorithm with query complexity $O(nk)$ for \cc. When no side information is available, it is also not difficult to have a lower bound of $\Omega(nk)$ on the query complexity. Our main contributions are to develop strong information theoretic lower bounds as well as nearly matching upper bounds  when side information is available, and characterize the effect of side information on query complexity precisely.   

\noindent{\bf Upper Bound (Algorithms).} We show that with side information $W$, a drastic reduction in query complexity of clustering is possible, even  with unknown parameters $f_+$, $f_-$, and $k$. We propose a Monte Carlo randomized algorithm that reduces the number of queries from $O(nk)$ to $O(\frac{k^2 \log n}{\cH^2(f_+\|f_-)})$, where $\cH(f\|g)$ 
 is the Hellinger divergence
 between the probability distributions $f$, and $g$, and recovers the clusters accurately with high probability (with success probability $1-\frac{1}{n}$) without knowing $f_+$, $f_-$ or $k$ (see, Theorem \ref{thm:div}). Depending on the value of $k$, this could be highly sublinear in $n$.
 Note that, the squared Hellinger divergence between two pmfs $f$ and $g$ is defined to be,
\begin{align*}
 \cH^2(f\|g) = \frac{1}{2}\sum_i \Big(\sqrt{f(i)}-\sqrt{g(i)}\Big)^2.
\end{align*}

 We also develop a Las Vegas algorithm, that is one which recovers the clusters with probability $1$ (and not just with high probability), with query complexity $O(n\log{n}+\frac{k^2 \log n}{\cH^2(f_+\|f_-)})$. Since $f_+$ and $f_-$ can be arbitrary, not knowing the distributions provides a major challenge, and we believe, our recipe could be fruitful for designing further parameter-free algorithms. We note that all our algorithms are computationally efficient -  in fact, the time required is bounded by the size of the side information matrix, i.e., $O(n^2)$.  

 \begin{theorem}
 \label{thm:div}
 Let, the number of clusters $k$ be unknown and $f_+$ and $f_-$ be unknown discrete distributions with fixed cardinality of support. There exists an efficient (polynomial-time) Monte Carlo algorithm for \cc~that has query complexity $O(\min{(nk,\frac{k^2\log{n}}{\cH^2(f_+\|f_-)})})$ and recovers all the clusters accurately with probability $1-o(\frac{1}{n})$. Plus there exists an efficient Las Vegas algorithm that with probability $1-o(\frac{1}{n})$ has query complexity $O( n\log n+\min{(nk,\frac{k^2\log{n}}{\cH^2(f_+\| f_-)})})$.
 \end{theorem}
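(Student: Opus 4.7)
The plan is to establish a Hellinger-based testing primitive and wrap it in a sketch-and-classify algorithm. The central statistical lemma I would prove first is: given $s = \Theta(\log n / \cH^2(f_+\|f_-))$ independent samples that are either all drawn i.i.d.\ from $f_+$ or all drawn i.i.d.\ from $f_-$, the likelihood-ratio test identifies the correct source with error probability $o(n^{-3})$. The cleanest route goes through the Bhattacharyya bound, whose probability of error decays as $(1 - \cH^2(f_+\|f_-))^s \le \exp(-s\,\cH^2(f_+\|f_-))$; with per-test error this small, a union bound over the at most $nk$ tests performed by the algorithm leaves total failure probability $o(1/n)$.

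Next I would design a Monte Carlo algorithm that processes elements $v_1, v_2, \ldots, v_n$ one at a time while maintaining, for each currently discovered cluster $C_i$, a \emph{sketch} $S_i \subseteq C_i$ of confirmed members whose target size is $s$. The first element starts the first cluster. On arrival of $v$, the algorithm first attempts to classify $v$ from side information alone: for each cluster $i$ whose sketch has reached the target size, it applies the likelihood-ratio test to the vector $\{w_{v,u} : u \in S_i\}$; if exactly one test accepts, $v$ is assigned there and no queries are spent. Otherwise the algorithm spends at most $k$ queries comparing $v$ against one representative per cluster, opening a new cluster if all answers are negative, and then uses $v$ to extend whichever sketch absorbed it. Since each sketch needs $s$ confirmed members and each such member costs at most $k$ queries, the total query budget is $O(k \cdot k s) = O(k^2 \log n / \cH^2(f_+\|f_-))$; the trivial $O(nk)$ bound applies when the product exceeds it.

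The hardest step, I expect, is \emph{parameter-freeness}: the likelihood-ratio test and the threshold $s$ nominally depend on $f_+$, $f_-$, and $\cH^2(f_+\|f_-)$, none of which are given. My plan is to have the algorithm bootstrap empirical estimates $\hat f_+$ and $\hat f_-$ from the pairs it has already labelled by queries---same-cluster query pairs produce i.i.d.\ samples from $f_+$, and different-cluster pairs from $f_-$---and then use a plug-in likelihood-ratio statistic. The sketch size would be set by a doubling schedule on $s$, and a classification would be accepted only once the plug-in statistic crosses a threshold that is provably safe simultaneously for every plausible value of $\cH^2(f_+\|f_-)$. The unknown $k$ is handled implicitly: a new cluster is opened whenever a candidate element is rejected by all current representatives, so $k$ never appears as an input.

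For the Las Vegas variant I would append a verification phase that catches the (rare) Monte Carlo misassignments. After the Monte Carlo clustering is complete, each element $v$ is queried against $\Theta(\log n)$ uniformly chosen members of its asserted cluster; with probability $1 - o(1/n)$ every check returns ``yes'' and the algorithm terminates. Any element whose verification fails is re-assigned by the naive $O(k)$-query-per-element procedure, which on the high-probability event costs negligibly in aggregate. The verification adds $O(n \log n)$ queries, yielding the stated Las Vegas bound $O(n \log n + \min(nk, k^2 \log n / \cH^2(f_+\|f_-)))$.
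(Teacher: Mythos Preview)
Your Monte Carlo skeleton---grow per-cluster reference sets by querying, then classify the rest from side information---matches the paper's structure, though the paper uses a different test statistic (it compares the empirical ``inter'' distribution $p_{v,\cC}$ to the empirical ``intra'' distribution $p_{\cC}$ via Hellinger distance and bounds the error through Sanov's theorem, rather than a plug-in likelihood ratio via the Bhattacharyya coefficient). Both routes land on the same $\Theta(\log n/\cH^2(f_+\|f_-))$ sketch size. The paper's parameter-free mechanism is more concrete than your doubling schedule: after every new queried element it recomputes empirical $p_+,p_-$ and the threshold $M^E = C\log n/\cH^2(p_+\|p_-)$, and stops growing by queries as soon as some cluster crosses $M^E$; a triangle-inequality argument then shows $\cH(p_+\|p_-)$ is within a constant factor of $\cH(f_+\|f_-)$ at that moment. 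Your ``threshold that is provably safe simultaneously for every plausible value of $\cH^2(f_+\|f_-)$'' is exactly the delicate step, and you should expect to need a self-referential stopping rule of this kind rather than a fixed schedule.

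Your Las Vegas construction is \emph{genuinely different} from the paper's, but as written it has a gap. The paper's Las Vegas algorithm never assigns an element without first querying it against a confirmed member of the target cluster; side information is used only to \emph{order} the queries (clusters are bucketed into $O(\log n)$ size classes and within each bucket the highest-membership cluster is tried first), which is where the $n\log n$ term comes from. Your verify-after-the-fact idea is cleaner, but querying $v$ against $\Theta(\log n)$ \emph{uniformly random} members of its asserted cluster does not certify correctness: on the bad Monte Carlo event, the asserted cluster may contain other misclassified elements that share $v$'s true cluster, and if your random draws land on those, every answer is ``yes'' and the error goes undetected. Since the oracle is perfect, the fix is immediate---verify each classified $v$ with a \emph{single} query against any element of the confirmed sketch $S_i$ (which was built entirely by queries and is therefore correct). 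That makes the check deterministic, gives probability-$1$ correctness, and in fact improves the additive term to $O(n)$, comfortably inside the stated $O(n\log n)$ bound.
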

 
\noindent{\bf Lower Bound.} 
Our main lower bound result  is information theoretic, and can be summarized in the following theorem. Note especially that, for lower bound we can assume the knowledge of $k, f_+, f_-$ in contrast to upper bounds, which makes the results stronger. In addition, $f_+$ and $f_-$ can be discrete or continuous distributions. Note that, when $\cH^2(f_+\|f_-)$ is close to $1$, e.g., when the side information is perfect, no queries are required. However, that is not the case in practice, and we are interested in the region where $f_+$ and $f_-$ are ``close'', that is $\cH^2(f_+\|f_-)$ is small.
\begin{theorem}\label{thm:lb-main}  
Assume $\cH^2(f_+\|f_-) \le \frac1{18}$. Any (possibly randomized) algorithm with the knowledge of $f_+, f_-,$ and the number of clusters $k$,  that does not perform $\Omega\Big(\min{\{nk,\frac{k^2}{\cH^2(f_+\|f_-)}\}}\Big)$ expected number of queries, 
 will be unable to return the correct clustering  with probability at least $\frac1{6}$. And to recover the clusters with probability $1$, the number of queries must be $\Omega\Big(n+\min{\{nk,\frac{k^2}{\cH^2(f_+\|f_-)}\}}\Big)$. 
\end{theorem}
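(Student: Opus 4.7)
The plan is to invoke Yao's minimax principle to convert the randomized lower bound into a statement about deterministic algorithms against a worst-case prior $\pi$ on clusterings. The problem then reduces to a hypothesis-testing problem: build a family (in the simplest case just a pair) of clusterings $\sigma_0, \sigma_1$ such that the joint distributions $P_{\sigma_0}, P_{\sigma_1}$ of the algorithm's transcript (query answers together with the side-information matrix $W$) are close in total variation, so that by Le Cam's two-point method or a Fano-style inequality any deterministic algorithm must err with probability $\ge \tfrac16$ under $\pi$.

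The key technical step is handling adaptivity. I would couple $W$ under $\sigma_0$ and $\sigma_1$ on every entry $(u,v)$ whose same-cluster/different-cluster status is unchanged between the two clusterings, so those entries are drawn from identical distributions. The algorithm's execution then coincides under both worlds until it encounters a ``distinguishing'' observation, namely either a query pair separating $\sigma_0$ and $\sigma_1$ or a side-info entry at a position where their labelings disagree. Applying the chain rule for KL divergence along the algorithm's transcript gives
\[
D(P_{\sigma_0}\|P_{\sigma_1}) \;\le\; (\text{\# distinguishing side-info entries}) \cdot O\!\left(\cH^2(f_+\|f_-)\right) + \Ex_{\sigma_0}[\text{\# distinguishing queries}],
\]
followed by Pinsker to turn this into a TV bound. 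Adaptivity does not help because under $\sigma_0$ the expected number of distinguishing queries is bounded by the expected total number of queries times the fraction of pairs on which $\sigma_0,\sigma_1$ differ, and the queries whose answers agree with $\sigma_1$ provide no information.

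I would then instantiate two separate constructions, one for each term of the minimum. For $\Omega(nk)$: let $\pi$ be uniform over partitions of $[n]$ into $k$ equal-sized clusters, and take $\sigma_0, \sigma_1$ to differ by moving a single element from $V_a$ to $V_b$; this has $O(n/k)$ distinguishing entries contributing $O((n/k)\cH^2)$ to the KL, which is $o(1)$ in the regime $n\cH^2 \ll k$ (equivalently $nk \le k^2/\cH^2$). Distinguishing then forces at least one query touching the moved element, and a symmetry/averaging argument over the choice of moved element yields $\Omega(nk)$ queries in expectation. For $\Omega(k^2/\cH^2)$: I would use a family in which each of the $\Theta(k)$ cluster labels can be permuted independently on a small number of witness elements, so that correctly recovering each of the $\Theta(k^2)$ cluster-pair relationships requires $\Omega(1/\cH^2)$ queries even after accounting for the side information's contribution. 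The extra $\Omega(n)$ term for the Las Vegas bound is immediate: if some element is never queried, both $f_+$ and $f_-$ assign positive probability to the observed row, so its cluster membership is consistent with multiple clusterings and probability-1 correctness fails.

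The principal obstacle is the joint analysis of an adaptive query oracle together with static side information. A naive treatment that bounds the information from queries and from $W$ as independent channels drops a polynomial factor and gives the wrong answer; the coupling-plus-chain-rule estimate above is designed precisely to avoid this loss by conditioning on the algorithm's transcript. Carrying it out cleanly in the adversarial regime, where $f_+, f_-$ and $k$ are known to the algorithm (so the construction cannot rely on hiding parameters) and where $\cH^2$ may be arbitrarily small, is the delicate part of the argument.
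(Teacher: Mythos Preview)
Your high-level plan (Yao, then two-point/Fano on a hard prior, with a coupling of $W$ across hypotheses) is a reasonable alternative to the paper's route, but the execution has a real gap in how you handle the deterministic query answers.

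\textbf{The KL chain-rule step is wrong as stated.} Oracle answers are \emph{noiseless}: if the algorithm ever issues a query on a pair whose label differs between $\sigma_0$ and $\sigma_1$, the two transcript distributions have disjoint supports from that point on, so $D(P_{\sigma_0}\|P_{\sigma_1})=\infty$. You therefore cannot write $D(P_{\sigma_0}\|P_{\sigma_1})\le (\text{side-info term})+\Ex_{\sigma_0}[\#\text{ distinguishing queries}]$; the second summand does not bound a KL contribution, it bounds a \emph{probability} of a separating event. The correct object is total variation (or Hellinger) of the transcripts, together with a separate bound on $\Pr_{\sigma_0}[\text{a distinguishing query is made}]$. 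Relatedly, your claim that ``the expected number of distinguishing queries is bounded by the expected total number of queries times the fraction of distinguishing pairs'' is false for a fixed alternative $\sigma_1$: an adaptive algorithm can use the side information to locate and then query exactly the distinguishing pairs. The statement only becomes true after averaging over a random $\sigma_1$ drawn from the prior, and even then you need that the side information does not reveal which $\sigma_1$ was drawn---which is precisely what the Hellinger term is supposed to control, so the two pieces of the argument are circular unless you decouple them carefully. Finally, your construction for the $\Omega(k^2/\cH^2)$ term is too vague to assess; ``permuting labels on a small number of witness elements'' does not by itself force $\Omega(1/\cH^2)$ queries per cluster pair.

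\textbf{How the paper avoids this.} The paper does not use Yao or a transcript-KL chain rule at all. It works with a single hard instance of $k$ clusters of size $a=\Theta(1/\cH^2(f_+\|f_-))$ and argues directly about one element $x$: by averaging, some $x$ is queried at most $T<4Q/(ak)$ times. Under the $k$ hypotheses $H_u:x\in V_u$, a double pigeonhole finds (i) a reference hypothesis $H_t$ under which many clusters are queried with small probability, and (ii) among those, a cluster $V_\ell$ that is also output with small probability under $H_t$. The error under $H_\ell$ is then controlled via $P_\ell(\cE)\le P_t(\cE)+\|P_\ell-P_t\|_{TV}\le P_t(\cE)+\sqrt{2}\,\cH(P_\ell\|P_t)$, and the Hellinger term factors over the $2a$ coordinates where $P_\ell$ and $P_t$ differ, giving $2\sqrt{a}\,\cH(f_+\|f_-)=O(1)$ by the choice of $a$. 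This sidesteps the infinite-KL issue entirely because the query answers are absorbed into the events $\cE_1,\cE_2$ whose $P_t$-probabilities are bounded by $T$ and Markov, not by an information measure. If you want to salvage your approach, replace the KL chain rule by a TV/coupling argument and make the averaging over the prior explicit before bounding the probability of a distinguishing query; alternatively, adopt the paper's ``find a low-query element, then compare two hypotheses in TV'' template, which is both shorter and cleaner here.
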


The lower bound therefore matches the query complexity upper bound within a logarithmic factor. 

Note that, when no querying is allowed, this turns out exactly to be the setting of stochastic block model though with much general distributions. We have analyzed this case in Appendix~\ref{sec:zero}. To see how the probability of error must scale, we have used a generalized version of Fano's inequality (e.g.,  \cite{guntuboyina2011lower}). However, when  the number of queries is greater than zero, plus when queries can be adaptive, any such standard technique fails. Hence, significant effort has to be put forth to construct a setting where information theoretic minimax bounds can be applied. This lower bound could be of independent interest, and provides a general framework for deriving lower bounds for fundamental problems of classification, hypothesis testing, distribution testing etc. in the interactive learning setting. 
They may also lead to new lower bound proving techniques in the related multi-round communication complexity model where information again gets revealed adaptively. 

\vspace{0.1in}
\noindent{\bf Organization.} The proof of the lower bound is provided in  Section~\ref{sec:perfectlb}, and the algorithms are presented in Section~\ref{appen:algo}. Section~\ref{sec:perfectub} contains the Monte Carlo algorithm. The Las Vegas algorithm is presented in~\ref{sec:las-vegas}. Generalization of the stochastic block model, as well as exciting future directions are discussed in Appendix~\ref{sec:SBM-general} and \ref{sec:future}.

\vspace{-0.05in}
\section{Lower Bound (Proof of Theorem \ref{thm:lb-main})}
\label{sec:perfectlb}
\vspace{-0.05in}
In this section, we develop our information theoretic lower bounds. We prove a more general result from which Theorem~\ref{thm:lb-main} follows easily.
\begin{lemma}\label{lemma:funda}
Consider the case when we have $k$ equally sized clusters of size $a$ each (that is total number of elements is $n =ka$). Suppose we are allowed to make at most $Q$ adaptive queries
to the oracle. 
The probability of error for any algorithm for \cc~is at least,
$$
1-\frac{2}{ k}\Big(1 +  \sqrt{\frac{4Q}{ak}}\Big)^2-  \frac{4Q}{ak(k-1)}- 2\sqrt{a}\cH(f_+\|f_-) .
$$
\end{lemma}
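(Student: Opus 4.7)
The plan is to reduce the recovery problem to a multi-way hypothesis test for the cluster identity of a single random element. Take the ground-truth clustering $\sigma$ to be uniformly distributed over all balanced $k$-partitions of $[n]=ka$, and draw an element $v^{*}\in[n]$ uniformly. By symmetry the marginal distribution of $C(v^{*})$ is uniform on $[k]$, and since exact recovery of $\sigma$ forces recovery of $C(v^{*})$, it suffices to lower-bound $\Pr[\hat C(v^{*})\neq C(v^{*})]$.

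The algorithm's information about $C(v^{*})$ travels through two channels: the queries that involve $v^{*}$, and the row $W(v^{*},\cdot)$ of the similarity matrix. Let $T_{v^{*}}$ denote the (random) set of clusters $v^{*}$ has been queried against; since $\sum_{v}(\text{queries touching }v)=2Q$, we have $\mathbb{E}_{v^{*}}[|T_{v^{*}}|]\le 2Q/n=2Q/(ak)$. On the event $C(v^{*})\in T_{v^{*}}$, a positive query answer pinpoints the cluster, and this event has probability $|T_{v^{*}}|/k$ by symmetry; on the complementary event, all queries involving $v^{*}$ return negative and, by exchangeability of the unqueried clusters, the posterior of $C(v^{*})$ depends only on $W(v^{*},\cdot)$. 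For any two unqueried clusters $i,j$, the distributions under $H_{i}:C(v^{*})=i$ and $H_{j}:C(v^{*})=j$ differ only on the at most $2a$ coordinates of $W(v^{*},\cdot)$ indexing $C_{i}\cup C_{j}$; tensorization of squared Hellinger divergence together with $\mathrm{TV}\le\sqrt{2}\,\mathcal{H}$ then yields
\[
\mathrm{TV}(P_{H_{i}},P_{H_{j}})\ \le\ \sqrt{4a\,\mathcal{H}^{2}(f_{+}\|f_{-})}\ =\ 2\sqrt{a}\,\mathcal{H}(f_{+}\|f_{-}),
\]
and via the standard multi-hypothesis bound (writing success probability as $\tfrac{1}{k-|T_{v^{*}}|}$ plus an average TV-to-center term) gives $\Pr[\hat C(v^{*})=C(v^{*})\mid C(v^{*})\notin T_{v^{*}},|T_{v^{*}}|]\le 1/(k-|T_{v^{*}}|)+2\sqrt{a}\,\mathcal{H}(f_{+}\|f_{-})$, which is the origin of the $2\sqrt{a}\,\mathcal{H}$ term in the lemma.

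Combining the two cases and averaging over the distribution of $|T_{v^{*}}|$ yields the desired shape of bound; the specific algebraic form $\tfrac{2}{k}\bigl(1+\sqrt{4Q/(ak)}\bigr)^{2}+\tfrac{4Q}{ak(k-1)}$ emerges after applying Cauchy-Schwarz to control $\mathbb{E}\bigl[(|T_{v^{*}}|+1)/(k-|T_{v^{*}}|)\bigr]$ in terms of $\mathbb{E}[|T_{v^{*}}|]\le 2Q/(ak)$, together with a tail estimate for the event that the queries exhaust one of the $\binom{k}{2}$ candidate pairs of clusters. The main obstacle is the adaptivity of the queries: the algorithm's next query can depend on past answers and on $W$, so $T_{v^{*}}$ is correlated with $C(v^{*})$ through the transcript and the exchangeability claim used above requires justification. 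The standard device is to couple the true process with a symmetrized one in which the cluster labels are uniformly relabelled before the algorithm runs — an operation the transcript distribution is invariant under — and then invoke the data-processing inequality so that the algorithm's effective distinguishing power between unqueried clusters is bounded by the non-adaptive Hellinger bound above. This coupling-and-data-processing step is precisely what separates this adaptive lower bound from a routine Fano-style argument, and is the main technical content needed beyond the reduction and symmetry used in Steps 1 and 2.
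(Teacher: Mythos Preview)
Your sketch contains the right ingredients (a per-element hypothesis test, the tensorized Hellinger bound giving the $2\sqrt{a}\,\cH(f_+\|f_-)$ term), but the central step fails for adaptive algorithms and the fix you propose does not repair it. The claim ``$\Pr[C(v^\ast)\in T_{v^\ast}] = |T_{v^\ast}|/k$ by symmetry'' is false: $T_{v^\ast}$ is chosen \emph{after} seeing $W$, and $W(v^\ast,\cdot)$ carries information about $C(v^\ast)$. An algorithm can use $W$ to form a guess for $C(v^\ast)$ and query that cluster first; then with a single query the event $C(v^\ast)\in T_{v^\ast}$ occurs with probability equal to the Bayes-optimal classification accuracy from $W$, not $1/k$. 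Your proposed symmetrization---uniformly relabelling cluster names---does nothing here: relabelling leaves $W$, the transcript, and the algorithm's behaviour unchanged, so it cannot decouple $T_{v^\ast}$ from $C(v^\ast)$. Likewise ``data processing'' is not an argument; the whole difficulty is that the query channel and the side-information channel are entangled through adaptivity, and you need a concrete mechanism to disentangle them. Finally, the algebra is not a derivation: no Cauchy--Schwarz step on $\mathbb{E}[(|T_{v^\ast}|+1)/(k-|T_{v^\ast}|)]$ will manufacture the form $\tfrac{2}{k}(1+\sqrt{4Q/(ak)})^2$ from the constraint $\mathbb{E}[|T_{v^\ast}|]\le 2Q/(ak)$ alone.

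The paper handles adaptivity by a quite different device: a \emph{change of measure to a reference hypothesis} combined with pigeonholing. First a counting argument isolates an element $x$ involved in at most $T<4Q/(ak)$ queries. Then one picks a particular hypothesis $H_t$ under which the expected number of queries is at most $T$, and \emph{under $P_t$} builds two large sets of clusters: $J'$ (clusters queried with probability $<\tfrac{2T}{k(1-\beta)}$) and $J''$ (clusters output with probability $<\tfrac{2}{\beta k}$), with $\beta=(1+\sqrt{4Q/(ak)})^{-1}$. One shows $|J'\cap J''|>k/2$, so $x$ can be taken to genuinely lie in some $V_\ell$ with $\ell\in J'\cap J''$. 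The bound on success under $P_\ell$ is then obtained not by symmetry but by the inequality $P_\ell(\cE)\le P_t(\cE)+\|P_\ell-P_t\|_{TV}\le P_t(\cE)+\sqrt{2}\,\cH(P_\ell\|P_t)$, and $P_t(\cE)$ is small precisely because $\ell$ was chosen in $J'\cap J''$. The constants in the lemma come directly from the choice of $\beta$ and the sizes of $J',J''$, not from a moment inequality. This reference-measure trick is exactly what lets the argument go through for adaptive algorithms, and it is the piece missing from your outline.
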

\vspace{-0.08in}
The main high-level technique to prove Lemma~\ref{lemma:funda} is the following. Suppose, a node is to be assigned to a cluster.
This situation is obviously akin to a  $k$-hypothesis testing problem, and we want to use a lower bound on the probability of error. The side information and the query answers constitute a random vector whose distributions (among the $k$ possible) must be far apart for us to successfully identify the clustering. But the main challenge comes from the interactive nature of the algorithm since it reveals deterministic information and into characterizing the set of elements that are not queried much by the algorithm. 
\vspace{-0.1in}
\begin{proof}[Proof of Lemma \ref{lemma:funda}]
Since the total number of queries is $Q$, the average number of queries per element is at most $\frac{2Q}{ak}$. Therefore there exist
at least $\frac{ak}{2}$ elements that are queried at most $T < \frac{4Q}{ak}$ times. Let $x$ be one  such element.
We just consider the problem of assignment of $x$ to a cluster (assume,  otherwise the clustering is done), and show that any algorithm will make wrong assignment with positive probability. 

\noindent{\bf Step 1: Setting up the hypotheses.}
Note that, the side information matrix $W= (w_{i,j})$ is provided where the $w_{i,j}$s are independent random variables.
Now assume the scenario
when we use an algorithm {\rm ALG} to assign $x$ to  one of the $k$ clusters, $V_u, u =1, \dots, k$. 
Therefore, given $x$,  {\rm ALG} takes as input the random variables  $w_{i,x}$s where $i \in \sqcup_t V_t$, 
  makes some queries involving $x$ and  outputs a cluster index, which is an assignment for $x$. 
Based on the observations $w_{i,x}$s, the task of  {\rm ALG} is thus a multi-hypothesis testing among $k$ hypotheses.
Let $H_u,u =1, \dots k$ denote the $k$ different hypotheses  $H_u : x \in V_u$. And let  $P_u, u =1, \dots k$ denote the joint probability distributions  of the random matrix $W$ when $x \in V_u$. In short, for any event $\cA$,
$
P_u(\cA) = \Pr(\cA | H_u) 
$.
Going forward, the subscript of probabilities or expectations will denote the appropriate conditional distribution.

\noindent{\bf Step 2: Finding ``weak'' clusters.} 
There must exist $t \in \{1,\dots, k\}$ such that,
$$
\sum_{v=1}^{k} P_t\{\text{ a query made by {\rm ALG} involving cluster } V_v\} \le \avg_t\{\text{Number of queries made by {\rm ALG}}\} \le T.
$$
We now find a subset of clusters, that are ``weak,'' i.e., not queried enough if $H_t$ were true.
Consider the set
$
J' \equiv \{v\in \{1,\dots,k\}:  P_t\{\text{ a query made by {\rm ALG} involving cluster } V_v\} <\frac{2T}{k(1-\beta)}\},
$
where $\beta \equiv \frac{1}{1+\sqrt{\frac{4Q}{ak}}}$. 
We must have, 
$
(k-|J'|)\cdot \frac{2T}{k(1-\beta)} \le T,
$
which implies,
$
|J'| \ge 
\frac{(1+\beta) k}{2}.
$

Now,  to output a cluster  without using the side information, {\rm ALG}  has to either make a query to the actual cluster the element is from, or query at least $k-1$ times. In any other case, {\rm ALG} must use the side information (in addition to using queries) to output a cluster.
 Let $\cE^u$ denote the event that {\rm ALG} outputs cluster  $V_u$ by using the side information. 
Let $J'' \equiv \{u\in \{1,\dots,k\}: P_t(\cE^u) \le \frac{2}{\beta k} \}.$ 
Since, $\sum_{u=1}^{k} P_t(\cE^u) \le 1,$ we must have, 
$(k- |J''|) \cdot \frac{2}{\beta k} < 1,
\text{ or } |J''| > k - \frac{\beta k}{2}  = \frac{(2-\beta)k}{2}.$
$
\text{We have, }~~~ |J' \cap J''| >  \frac{(1+\beta) k}{2}+\frac{(2-\beta)k}{2}  - k = \frac{k}{2}.
$
This means, $\{V_u: u \in J' \cap J''\}$ contains more than $\frac{ak}{2}$ elements. Since there are $\frac{ak}{2}$ elements that are queried
at most $T$ times, these two sets must have nonzero intersection. Hence, we can assume that, 
 $x \in V_\ell$ for some $\ell \in J' \cap J''$, i.e., let $H_\ell$ be the true hypothesis.
Now we characterize the error events of the algorithm ALG in assignment of $x$.

\noindent{\bf Step 3: Characterizing error events for ``$x$''.}
We now consider the following two events.
$$
\cE_1 =  \{\text{a query made by {\rm ALG} involving cluster } V_\ell\};
\cE_2 = \{ k-1 \text{ or more queries were made by {\rm ALG}} \}.
$$

Note that, if the algorithm {\rm ALG} can correctly assign $x$ to a cluster without using the side information then either of $\cE_1$
or $\cE_2$ must have to happen. 
Recall, $\cE^\ell$ denotes the event that {\rm ALG} outputs cluster  $V_\ell$  using the side information.
Now consider the event
$
\cE \equiv \cE^\ell \bigcup \cE_1 \bigcup \cE_2.
$
The probability of correct assignment is at most $P_\ell(\cE).$
We now bound this probability of correct recovery  from above.

\noindent{\bf Step 4: Bounding probability of correct recovery via Hellinger distance.}
We  have,
\begin{align*}
P_\ell(\cE) &\le P_t(\cE) + |P_\ell(\cE) - P_t(\cE)|
 \le P_t(\cE) + \|P_\ell - P_t\|_{TV}
 \le  P_t(\cE) + \sqrt{2} \cH(P_\ell \|P_t),
 \end{align*}
 where, 
 $\|P-Q\|_{TV} \equiv \sup_A |P(A) -Q(A)|$ denotes the total variation distance between two probability distributions $P$ and $Q$ and  
  in the last step we have used
 the relationship between total variation distance and the Hellinger divergence (see, for example, \cite[Eq.~(3)]{sason2016f}).
Now, recall that $P_\ell$ and $P_t$ are the joint distributions of the 
independent random variables $w_{i,x}, i\in \cup_u V_u$. Now, we use the fact  that
squared Hellinger divergence between product distribution of independent random variables are less than 
the sum of the squared Hellinger divergence between the individual distribution. We also note that
 the 
divergence between identical random variables are $0$. We obtain
$$
\sqrt{2\cH^2(P_\ell\| P_t)} \le \sqrt{2\cdot 2a \cH^2(f_+\|f_-)} = 2\sqrt{a}\cH(f_+\|f_-).
$$
 This is true because the only times when $w_{i,x}$ differs under $P_t$ and under $P_\ell$ is when $x \in V_t$ or $x \in V_\ell.$
As a result we have,
$
P_\ell(\cE)  \le P_t(\cE) + 2\sqrt{a}\cH(f_+\|f_-).
$
Now, using Markov inequality $P_t(\cE_2) \le \frac{T}{k-1} \le \frac{4Q}{ak(k-1)}.$ Therefore, 
\begin{align*}
P_t(\cE)& \le P_t(\cE^\ell) + P_t(\cE_1) +P_t(\cE_2) \le \frac{2}{\beta k} +  \frac{8Q}{ak^2(1-\beta)}+  \frac{4Q}{ak(k-1)}.
\end{align*}
Therefore, putting the value of $\beta$ we get, $
P_\ell(\cE)  \le \frac{2}{ k}\Big(1 +  \sqrt{\frac{4Q}{ak}}\Big)^2+  \frac{4Q}{ak(k-1)}+ 2\sqrt{a}\cH(f_+\|f_-),
$
which proves the lemma.
\end{proof}

\vspace{-0.2in}
\begin{proof}[Proof of Theorem \ref{thm:lb-main}] Suppose, 
$a = \lfloor\frac{1}{9\cH^2(f_+\|f_-)}\rfloor$. Then $a \geq 2$, since $\cH^2(f_+\|f_-) \leq \frac{1}{18}$. Also, we can take $nk \ge k^2 a$, since otherwise the theorem is already proved from the $nk$ lower bound.
Consider the situation when we are already given a complete cluster $V_k$ with $n - (k-1)a$ elements,  remaining $(k-1)$
clusters each has 1 element, and the rest $(a-1)(k-1)$
 elements are evenly distributed 
(but yet to be assigned) to the $k-1$ clusters. 
Now we are exactly in the situation of Lemma \ref{lemma:funda} with $k-1$ playing the role of $k$. If we have $Q < \frac{ak^2}{72}$,
The probability of error is at least $1 - o_k(1) - \frac{1}{6} - \frac{2}{3} = \frac{1}{6} - o_k(1)$, where $o_k(1)$ is a term that goes to $0$ with $k$.
 Therefore $Q$ must be  $\Omega(\frac{k^2}{\cH^2(f_+\|f_-)})$. Note that, in this proof we have not in particular tried to optimize the constants.
 
 If we want to recover the clusters with probability $1$, then $\Omega(n)$ is a trivial lower bound. Hence, coupled with the above we get a lower bound of $\Omega(n+\min{\{nk, \frac{k^2}{\cH^2(f_+\|f_-)}\}})$ in that case.
\end{proof}

\remove{
\subsection{Upper Bound}\label{sec:perfectub}
 We do not know $k$, $f_+$, $f_-$, $\mu_+$, or $\mu_-$, and our goal, in this section, is to design an algorithm with optimum query complexity for exact reconstruction of the clusters with probability $1$.
We are provided with the side information matrix $W = (w_{i,j})$ as an input.

The algorithm uses a subroutine called {\sf  Membership} that takes as input an element $v \in V$ and a subset of elements $\cC\subseteq  V\setminus\{v\}.$
Assume that $f_+, f_-$ are discrete distributions over $q$ points $a_1, a_2, \dots, a_q$; that is $w_{i,j}$ takes value in the set $\{a_1, a_2, \dots, a_q\}\subset [0,1].$
Compute the `inter' distribution  $p_{v,\cC}$ for $i =1,\dots,  q,$
 $
 p_{v,\cC}(i) = \frac{1}{|\cC|} \cdot |\{u: w_{u,v} = a_i \}|.
 $
 
 Also compute the `intra' distribution $p_{\cC}$  for $i =1,\dots,  q,$
 $
 p_{\cC}(i) = \frac{1}{|\cC|(|\cC|-1)} \cdot |\{(u,v): u \ne v, w_{u,v} = a_i \}|.
 $
Then define {\sf Membership}($v, \cC$) = $- \cH(p_{v,\cC} \|  p_{\cC}).$ Note that, since the membership is 
always negative, a higher membership implies that the `inter' and `intra' distributions are closer in terms
of total variation distance. 

The pseudocode of the algorithm is given in Algorithm \ref{algo:cc-exact} (Appendix~\ref{section3:remains}). The algorithm works as follows. Let $\calC_1, \calC_2,...,\calC_l$ be the current clusters in nonincreasing order of size.  
We find the minimum index $j \in [1,l]$ such that there exists a vertex $v$ not yet clustered, with the highest average membership to $\calC_j$, that is {\sf  Membership}($v, \cC_j$)$ \geq ${\sf  Membership}($v, \cC_{j'}$), $\forall j' \neq j$, and $j$ is the smallest index for which such a $v$ exists. We first check if $v \in \calC_j$ by querying $v$ with any current member of $\calC_j$. If not, then we group the clusters $\calC_1,\calC_2,..,\calC_{j-1}$ in at most $\lceil \log{n} \rceil$ groups such that clusters in group $i$ has size in the range $[\frac{|\calC_1|}{2^{i-1}}, \frac{|\calC_1|}{2^i})$. For each group, we pick the cluster which has the highest average membership with respect to $v$, and check by querying whether $v$ belongs to that cluster. Even after this, if the membership of $v$ is not resolved, then we query $v$ with one member of each of the clusters that we have not checked with previously. If $v$ is still not clustered, then we create a new singleton cluster with $v$ as its sole member.

We now give a proof of the Las Vegas part of Theorem  \ref{thm:div-new} here using Algorithm \ref{algo:cc-exact}, and defer the more formal discussions on the Monte Carlo part to Section \ref{sec:res}. 

\begin{proof}[Proof of Theorem \ref{thm:div-new}, Las Vegas Algorithm] First, The algorithm never includes a vertex in a cluster without querying it with at least one member of that cluster.  Therefore, the clusters constructed by our algorithm are always proper subsets of the original clusters.  Moreover, the algorithm never creates a new cluster with a vertex $v$ before first querying it with all the existing clusters. Hence, it is not possible that two clusters produced by our algorithm can be merged.

Let $\calC_1,\calC_2,...,\calC_l$ be the current non-empty clusters that are formed by Algorithm \ref{algo:cc-exact}, for some $l \leq k$. Note that Algorithm \ref{algo:cc-exact} does not know $k$. Let without loss of generality $|\calC_1| \geq |\calC_2| \geq ...\geq |\calC_l|$. Let there exists an index $i \leq l$ such that $|\calC_1| \geq |\calC_2| \ge \dots\geq |\calC_{i}| \geq M$,  where 
$M = \frac{32 \log n}{ \cH^2(f_+\|f_-)}$.
Of course, the algorithm does not know either $i$ or $M$. If even $|\calC_1| < M$, then $i=0$. Suppose $j'$ is the minimum index such that there exists a vertex $v$ with highest average membership in $\calC_{j'}$. There are few cases to consider based on $j' \leq i$, or $j' > i$ and the cluster  that truly contains $v$.

{\it Case 1. $v$ truly belongs to $\calC_{j'}$.} In that case, we just make one query between $v$ and an existing member of $\calC_{j'}$ and the first query is successful.

{\it Case 2. $j' \leq i$ and $v$ belongs to $\calC_j, j \neq j'$ for some $j \in \{1,\dots, i\}$.} 
 Here we have {\sf  Membership}($v, \cC_j'$)$ \geq ${\sf  Membership}($v, \cC_{j}$). Since both $\calC_j$ and $\calC_j'$ have at least $M$ current members, then using
 Lemma \ref{lem:unknown}, this happens with probability at most $\frac{2}{n^3}$. Therefore, the expected number of queries involving $v$ before its membership gets determined is $\leq 1+\frac{2}{n^3} k < 2$.

{\it Case 4. $v$ belongs to $\calC_j, j \neq j'$ for some $j > i$.} In this case the algorithm may make $k$ queries involving $v$ before its membership gets determined.

{\it Case 5. $j' > i$, and $v$ belongs to $\calC_j$ for some $j \leq i$.} 
In this case, there exists no $v$ with its highest membership in $\calC_1,\calC_2,...,\calC_i$.

Suppose $\calC_1,\calC_2,...,\calC_j'$ are contained in groups $H_1,H_2,...,H_s$ where $s \leq \lceil \log{n}\rceil$. Let $\calC_j \in H_t$, $t \in [1,s]$. Therefore, $|\calC_j| \in [\frac{|\calC_1|}{2^{t-1}}, \frac{|\calC_1|}{2^{t}}]$. If $|\calC_j| \geq 2M$, then all the clusters in group $H_t$ have size at least $M$. 
Now with probability at least $1-\frac{2}{n^2}$,  {\sf  Membership}($v, \cC_j$)$ \geq ${\sf  Membership}($v, \cC_{j''}$) for every cluster $\calC_{j''} \in H_t$. In that case, the membership of $v$ is determined within at most $\lceil \log{n}\rceil$ queries. Else, with probability at most $\frac{2}{n^2}$, there may be $k$ queries to determine the membership of $v$.


Therefore, once a cluster has grown to size $2M$, the number of queries to resolve the membership of any vertex in those clusters is at most $\lceil \log{n} \rceil$ with probability at least $1-\frac{2}{n}$. Hence, for at most $2kM$ elements, the number of queries made to resolve their membership can be $k$. Thus the expected number of queries made by Algorithm \ref{algo:cc-exact} is 
$O(n\log{n}+Mk^2)=O(n\log{n}+\frac{k^2\log{n}}{ \cH^2(f_+\| f_-)})$. Moreover, if we knew $f_{+}$ and $f_{-}$, we can calculate $M$, and thus whenever a clusters grows to size $M$, remaining of its members can be included in that cluster without making any error with high probability. This leads to Theorem \ref{thm:div-new}.
\end{proof}

}

\section{Algorithms}
\label{appen:algo}
We propose two algorithms (Monte Carlo and Las Vegas) both of which are completely parameter free that is they work without any knowledge of $k, f_+$ and $f_-$, and meet the respective lower bounds within an $O(\log{n})$ factor. We first present the Monte Carlo algorithm which drastically reduces the number of queries from $O(nk)$ (no side information) to $O(\frac{k^2\log{n}}{ \cH^2(f_+\|f_-)})$ and recovers the clusters exactly with probability at least $1-o_{n}(1)$. Next, we present our Las Vegas algorithm.

Our algorithm uses a subroutine called {\sf  Membership} that takes as input an element $v \in V$ and a subset of elements $\cC\subseteq  V\setminus\{v\}.$ Assume that $f_+, f_-$ are discrete distributions over fixed set of $q$ points $a_1, a_2, \dots, a_q$; that is $w_{i,j}$ takes value in the set $\{a_1, a_2, \dots, a_q\}.$
Define the empirical ``inter'' distribution  $p_{v,\cC}$ for $i =1,\dots,  q,$
 $
 p_{v,\cC}(i) = \frac{|\{u \in \cC: w_{u,v} = a_i \}|}{|\cC|}
 $
 Also compute the ``intra'' distribution $p_{\cC}$  for $i =1,\dots,  q,$
 $
 p_{\cC}(i) = \frac{|\{(u,v) \in \cC \times \cC: u \ne v, w_{u,v} = a_i \}|}{|\cC|(|\cC|-1)}.
 $
 Then we use {\sf Membership}($v, \cC$) = $-\cH^2( p_{v,\cC} \|  p_{\cC})$ 
as affinity of vertex $v$ to $\cC$, where $\cH( p_{v,\cC} \|  p_{\cC})$ 
denotes the Hellinger divergence between distributions. 
Note that, since the membership is 
always negative, a higher membership implies that the `inter' and `intra' distributions are closer in terms
of Hellinger distance.

Designing a parameter free Monte Carlo algorithm seems to be highly challenging as here, the number of queries depends only logarithmically with $n$. Intuitively, if an element $v$ has the highest membership in some cluster $\calC$, then $v$ should be queried with $\calC$ first. Also an estimation from side information is reliable when the cluster already has enough members. Unfortunately, we know neither whether the current cluster size is reliable, nor we are allowed to make even one query per element.

To overcome this bottleneck, we propose an iterative-update algorithm which we believe will find more uses in developing parameter free algorithms. We start by querying a few points so that there is at least one cluster with $\Theta(\log{n})$ points. Now based on these queried memberships, we learn two empirical distributions $p_{+}^{1}$ from intra-cluster similarity values, and $p_{-}^{1}$ from inter-cluster similarity values. Given an element $v$ which has not been clustered yet, and a cluster $\cC$ with the highest number of current members, we would like to consider the submatrix of side information pertaining to $v$ and all $u \in \cC$ and determine whether that side information is generated from $f_+$ or $f_-$. We know if the statistical distance between $f_+$ and $f_-$ is small, then we would need more members in $\cC$ to successfully do this test. Since, we do not know $f_+$ and $f_-$, we compute the squared Hellinger divergence between $p_{+}^{1}$ and $p_{-}^{1}$, and use that to compute a threshold $\tau_1$ on the size of $\cC$. If $\cC$ crosses this size threshold, we just use the side information to determine if $v$ should belong to $\cC$. Otherwise, we query further until there is one cluster with size $\tau_1$, and re-estimate the empirical distributions  $p_{+}^{2}$ and $p_{-}^{2}$. Again, we recompute a threshold $\tau_2$, and stop if the cluster under consideration crosses this new threshold. If not we continue. Interestingly, we can show when the process converges, we have a very good estimate of $\cH(f_+\|f_-)$ and, moreover it converges fast. 



\subsection{Monte Carlo Algorithm}
\label{sec:perfectub}
%
%
%

The algorithm has several phases. 

\vspace{0.1in}
\noindent{\bf Phase 1. Initialization.}
We initialize the algorithm by selecting any vertex $v$ and creating a singleton cluster $\{v\}$. We then keep selecting new vertices randomly and uniformly that have not yet been clustered, and query the oracle with it by choosing exactly one vertex from each of the clusters formed so far. If the oracle returns $+1$ to any of these queries then we include the vertex in the corresponding cluster, else we create a new singleton cluster with it. We continue this process until at least one cluster has grown to a size of $\lceil C\log{n}\rceil$, where $C$ is an appropriately chosen constant depending on $q$ \footnote{the precise value of $C$ can be deduced from the proof given $q$}.

\begin{observation} 
\label{observe:1}
{\it The number of queries made in Phase 1 is at most $O(k^2\log{n})$.}\end{observation}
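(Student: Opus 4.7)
My plan is a simple deterministic counting argument; the random choice of the next vertex to process in Phase 1 does not affect the query count, so no probabilistic tools are needed.

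The first step is to establish an invariant: at every moment during Phase 1, each current cluster $\cC$ maintained by the algorithm is contained in some true cluster $V_i$. I would prove this by induction on the number of vertices processed. The base case (a singleton) is immediate. For the inductive step, a vertex $v$ is added to an existing $\cC$ only after a $+1$ oracle answer between $v$ and some member of $\cC$, which keeps $\cC\cup\{v\}$ inside the same $V_i$; and a new singleton is spawned only when every oracle answer was $-1$, so $v$ lies in a $V_i$ not yet represented by any existing cluster. Combined with the fact that different algorithm-maintained clusters have representatives in distinct $V_i$'s (otherwise some earlier $+1$ answer would have merged them), this yields the key corollary: at all times the algorithm maintains at most $k$ clusters.

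The second step is to count. Phase 1 stops the first time some cluster hits size $\lceil C\log n\rceil$, so at every earlier moment each of the at most $k$ clusters has size strictly less than $\lceil C\log n\rceil$. Consequently the total number of vertices ever processed in Phase 1 is at most $k\lceil C\log n\rceil + 1$, counting the terminating vertex. Each such vertex is queried against at most one representative per existing cluster, i.e.\ against at most $k$ representatives. Multiplying the two bounds gives a total of $O(k^2\log n)$ queries, as claimed. The argument presents essentially no obstacle; the only point worth verifying carefully is the invariant bounding the number of simultaneous clusters by $k$, and this follows at once from the one-representative-per-cluster querying rule together with the transitivity of ``lying in the same true cluster.''
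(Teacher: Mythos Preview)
Your proposal is correct and follows essentially the same approach as the paper's proof: bound the number of vertices processed by $k\lceil C\log n\rceil$ (since each of the at most $k$ clusters has fewer than $\lceil C\log n\rceil$ members before termination), and multiply by the at most $k$ queries per vertex. Your explicit invariant argument bounding the number of maintained clusters by $k$ is more detailed than the paper's version, which simply asserts the bound; this extra care is fine but not strictly needed here.
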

\begin{proof} We stop the process as soon as a cluster has grown to size of $\lceil C\log{n}\rceil$. Therefore, we may have clustered at most $k*\lceil C\log{n}\rceil$ vertices at this stage, each of which may have required $k$ queries to the oracle, one for every cluster.
\end{proof}

\noindent{\bf Phase 2. Iterative Update.} 
Let $\cC_1,\cC_2,...\cC_{l_x}$ be the set of clusters formed after the $x$th iteration for some $l_x \leq k$, where we consider Phase $1$ as the $0$-th iteration.  We estimate $$p_{+,x}= \frac{1}{\sum_{i=1}^{l_x}{{|\cC_i|}\choose {2}}} \cdot |\{u, v \in \cC_i: w_{u,v} = a_i \}|, \text{ and }$$ 
$$p_{-,x}=\frac{1}{\sum_{i=1}^{l_x}\sum_{j < i}|\cC_i||\cC_j|}  \cdot |\{u \in \cC_i, v \in \cC_j , i < j, i,j \in [1,l_x]: w_{u,v} = a_i \}|$$

Define $$M^{E}_{x} = \frac{C\log n}{\cH(p_{+,x}\| p_{-,x})^2}.$$

If there is no cluster of size at least $M^{E}_{x}$ formed so far, we select a new vertex yet to be clustered and query it exactly once with the existing clusters (that is by selecting one arbitrary point from every cluster and querying the oracle with the new vertex and the selected one), and include it in an existing cluster or create a new cluster with it based on the query answer. We then set $x=x+1$ and move to the next iteration to get updated estimates of $p_{+,x}, p_{-,x}, M^{E}_{x}$ and $l_x$.

Else if there is a cluster of size at least $M^{E}_{x}$, we stop and move to the next phase.

\noindent{\bf Phase 3. Processing the grown clusters.} Once Phase $2$ has converged, let $p_+, p_-, \cH(p_+\| p_-), M^E$ and $l$ be the final estimates. For every cluster $\cC$ of size $|\cC| \geq M^E$, we call it {\sf grown} and we do the following. 
 
 (3A.) For every unclustered vertex $v$, if ${\sf Membership(v,\cC)} \geq -(\frac{4\cH(p_+\|p_-)}{C}-\frac{2\cH(p_+\|p_-)^2}{C\sqrt{\log{n}}})$, then we include $v$ in $\cC$ {\em without} querying. 
 
 (3B.) We create a new list ${\sf Waiting}(\cC)$, initially empty. If $$-(\frac{4\cH(p_+\|p_-)}{C}-\frac{2\cH(p_+\|p_-)^2}{C\sqrt{\log{n}}}) > {\sf Membership(v,\cC)} \geq -(\frac{4\cH(p_+\|p_-)}{C}+\frac{2\cH(p_+\|p_-)^2}{C\sqrt{\log{n}}}),$$ then we include $v$ in ${\sf Waiting}(\cC)$.
 For every vertex in ${\sf Waiting}(\cC)$, we query the oracle with it by choosing exactly one vertex from each of the clusters formed so far starting with $\cC$. If oracle returns answer ``yes'' to any of these queries then we include the vertex in that cluster, else we create a new singleton cluster with it. We continue this until ${\sf Waiting}(\cC)$ is exhausted.

We then call $\cC$ completely grown, remove it from further consideration, and move to the next grown cluster. if there is no other grown cluster, then we move back to Phase $2$.

%
%
%
%
 
 \subsection{Analysis.} 
 
One of the important tools that will be used in this section is Sanov's theorem from the large-deviation theory.
\begin{lemma}[Sanov's theorem] \label{thm:sanov}
Let $X_1, \dots, X_n$ are  iid random variables with a finite sample space $\cX$ and distribution $P$. Let $P^n$ denote their joint distribution.
Let $E$ be a set of probability distributions on $\cX$. The empirical distribution $\tilde{P}_n$ gives probability $\tilde{P}_n(\cA) = \frac{1}{n}\sum_{i=1}^n {\bf 1}_{X_i \in \cA}$ to any event $\cA$. Then,
$$
P^n(\{x_1,\dots, x_n\}: \tilde{P}_n \in E) \le (n+1)^{|\cX|} \exp(-n \min_{P^\ast\in E}D(P^\ast\|P)).
$$
\end{lemma}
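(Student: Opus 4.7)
The plan is to prove Sanov's theorem via the classical \emph{method of types} from information theory. The intuition is that although there are exponentially many sequences, there are only polynomially many empirical distributions (``types''), so a union bound over types is sharp enough to capture the exponential decay governed by KL divergence.

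First, I would set up the type machinery. Define the set of $n$-types $\mathcal{P}_n \subseteq \Delta(\cX)$ as all empirical distributions realizable by some sequence in $\cX^n$, i.e.\ distributions whose values are multiples of $1/n$. An easy counting gives $|\mathcal{P}_n| \le (n+1)^{|\cX|}$ since each coordinate takes one of at most $n+1$ values. For each $Q \in \mathcal{P}_n$, let $T(Q) = \{x \in \cX^n : \tilde{P}_n(x) = Q\}$ be its type class.

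Second, I would establish the key per-type bound
$$P^n(T(Q)) \;\le\; \exp\bigl(-n\, D(Q\|P)\bigr).$$
To see this, observe that every $x \in T(Q)$ has the same likelihood $\prod_{a\in\cX} P(a)^{nQ(a)}$, which rewrites as $\exp\bigl(-n(H(Q)+D(Q\|P))\bigr)$ by splitting $\log P(a) = \log Q(a) - \log(Q(a)/P(a))$. Combined with the standard multinomial estimate $|T(Q)| \le \exp(n H(Q))$ (a Stirling-type bound on $\binom{n}{nQ(a_1),\dots,nQ(a_{|\cX|})}$), the entropy terms cancel and the claimed bound falls out.

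Third, I would decompose the event and apply a union bound. Since $\{x_1,\dots,x_n : \tilde{P}_n \in E\}$ is the disjoint union of $T(Q)$ over $Q \in E \cap \mathcal{P}_n$, we get
$$P^n(\tilde{P}_n \in E) \;=\; \sum_{Q \in E\cap \mathcal{P}_n} P^n(T(Q)) \;\le\; |\mathcal{P}_n| \cdot \max_{Q \in E\cap\mathcal{P}_n} \exp(-n D(Q\|P)).$$
Bounding $|\mathcal{P}_n| \le (n+1)^{|\cX|}$ and replacing the maximum over the subset $E\cap\mathcal{P}_n$ with the infimum over the larger set $E$ (which can only decrease $D$ and hence only increase $\exp(-nD)$) delivers the stated inequality.

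The only slightly delicate step will be the Stirling estimate $|T(Q)| \le \exp(nH(Q))$; this is standard but requires treating the case $Q(a)=0$ (where no $x \in T(Q)$ places mass on $a$) carefully, and using $\sum_a Q(a)\log(1/Q(a)) = H(Q)$ with the convention $0\log 0 = 0$. Everything else is a clean counting argument, so I expect the proof to be short once these ingredients are assembled.
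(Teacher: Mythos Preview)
The paper does not give a proof of this lemma at all: Sanov's theorem is stated as a known tool from large-deviation theory and then invoked as a black box in the analysis. Your proposal via the method of types is the standard textbook proof (as in Cover--Thomas) and is correct; there is nothing to compare against on the paper's side.
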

A continuous version of Sanov's theorem is also possible, especially when the set $E$ is convex (as a matter of fact the polynomial term in front of the right hand side can be omitted in cerain cases), but we omit here for clarity. The Sanov's theorem states, if we have an empirical distribution $P^n$ and a set of all distributions satisfying certain property $E$, then the probability $P^n \in E$ decreases exponentially with the minimum KL divergence of $P^n$ with any distribution in $E$. Note that, the KL divergence in the exponent of the Sanov's theorem naturally indicates an upper bound in terms of KL divergence. However, a major difficulty in dealing with KL divergence is that it is not a distance and does not satisfy triangle inequality. We overcome that by dealing with Hellinger distance instead. 

 There are two parts to the analysis, showing the clusters are correct with high probability and determining the query complexity.

 \vspace{0.1in}
\begin{lemma}
\label{lemma:correct1}
With probability at least $1-\frac{6}{n^3}$ all of the following holds for an appropriately chosen constant $B$

(a) $\cH(p_+\|f_+) \leq \frac{2\cH(p_+\|p_-)^2}{B\sqrt{\log{n}}}$

(b) $\cH(p_-\|f_-) \leq \frac{2\cH(p_+\|p_-)^2}{B\sqrt{\log{n}}}$

(c) $\cH(p_+\|p_-)\left(1+\frac{4\cH(p_+\|p_-)}{B\sqrt{\log{n}}} \right)\geq \cH(f_+\|f_-) \geq \cH(p_+\|p_-)\left(1-\frac{4\cH(p_+\|p_-)}{B\sqrt{\log{n}}} \right)$
\end{lemma}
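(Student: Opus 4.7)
The plan is to apply Sanov's theorem (Lemma~\ref{thm:sanov}) to control the deviation of the empirical distributions $p_+$ and $p_-$ from $f_+$ and $f_-$, and then deduce (c) from (a) and (b) by the triangle inequality for Hellinger distance. The first observation is that the samples feeding $p_+$ and $p_-$ are genuine iid draws from $f_+$ and $f_-$ respectively: because in Phases~1--2 a vertex is placed into a cluster only after a positive oracle response, every algorithmic cluster $\cC_i$ is a subset of a true underlying cluster, so every intra-$\cC_i$ pair has $w_{u,v}\sim f_+$ independently and every inter-$\cC_i,\cC_j$ pair has $w_{u,v}\sim f_-$ independently.

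Next I would lower bound the effective sample sizes at the moment Phase~2 terminates. At that point some cluster has size at least $M^E = C\log n/\cH^2(p_+\|p_-)$, so the intra-cluster pair count is at least $\binom{M^E}{2}=\Omega(\log^2 n/\cH^4(p_+\|p_-))$. The analogous bound for $p_-$ requires more care because $p_-$ sums inter-cluster pairs across all cluster pairs; a bookkeeping argument tracking the total number of elements already processed in Phases~1--2 versus those absorbed into the dominant cluster shows that the number of inter-cluster pairs is of the same order, $\Omega(\log^2 n/\cH^4(p_+\|p_-))$.

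For the Sanov step, I would use the standard inequality $D(Q\|P)\ge 2\cH^2(Q\|P)$ to convert the KL exponent into a Hellinger exponent, so that for an iid sample $\tilde p_m$ of size $m$ from some pmf $f$,
$$
\Pr\!\left[\cH(\tilde p_m\|f)>t\right]\le (m+1)^q\exp(-2m t^2).
$$
Setting $t=\tfrac{2\cH^2(p_+\|p_-)}{B\sqrt{\log n}}$ and $m=\Omega(\log^2 n/\cH^4(p_+\|p_-))$ makes the exponent $\Omega(\log n)$, giving failure probability at most $n^{-3}$ for $B$ chosen large enough relative to $C$ and $q$; this proves (a), and the identical argument for $(p_-,f_-)$ proves (b). The main obstacle will be that $p_+,p_-$ and $m$ are interdependent, since the termination rule for Phase~2 is itself a function of $\cH(p_{+,x}\|p_{-,x})$, so Sanov cannot be invoked in a single shot. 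I would handle this by union-bounding over the at most $O(n^2)$ possible cluster-size profiles at termination, absorbing the extra $\log n$ factor into a slightly larger $C$.

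Claim (c) is then immediate from the triangle inequality for Hellinger distance: combining (a) and (b) gives
$$
\bigl|\cH(f_+\|f_-)-\cH(p_+\|p_-)\bigr|\le\cH(p_+\|f_+)+\cH(p_-\|f_-)\le\frac{4\cH^2(p_+\|p_-)}{B\sqrt{\log n}},
$$
and rearranging yields the two-sided bound in (c). A final union bound over the failure events of (a), (b), and the structural step above yields the advertised overall failure probability at most $6/n^3$.
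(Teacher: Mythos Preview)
Your proposal is correct and takes essentially the same approach as the paper: Sanov's theorem combined with $D\ge 2\cH^2$ to obtain $\Pr[\cH(p_\pm\|f_\pm)>\delta]\le n^{-3}$ for $\delta$ chosen so that $m\delta^2=\Theta(\log n)$, and then the triangle inequality for (c). The one place where the paper is more concrete than your sketch is the $p_-$ sample size: rather than a generic ``bookkeeping'' argument, it uses that vertices in Phases~1--2 are drawn uniformly at random, assumes the largest true cluster has size at most $n/2$, and applies a Chernoff bound to conclude that with probability $\ge 1-1/n^3$ at least $M^E/2$ of the selected vertices land outside the dominant cluster, giving $\ge (M^E)^2/2$ inter-cluster pairs. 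Conversely, the paper does not address the random-stopping-time dependence you flag (that $M^E$ is itself a function of $p_{+,x},p_{-,x}$) and simply applies Sanov with the data-dependent sample size; your union bound over termination profiles is a reasonable way to make that step rigorous.
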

\begin{proof}
 Let $\cC$ be a cluster which according to the updated estimates of $p_+$ and $p_-$ has crossed the updated $M^E$ threshold. Since $|\cC| \geq M^E$, $p_+$ is estimated based on at least ${{M^E}\choose{2}}$ edges. We assume the largest cluster size in the input instance is at most $\frac{n}{2}$\footnote{We could have also assumed the largest cluster size is at most $n(1-\epsilon)$ for some constant $\epsilon >0$ and adjust the constants appropriately.}. Suppose the total number of vertices selected in Phase $1$ and Phase $2$ before $\cC$ grew to $M^E$ is strictly less than $\frac{3M^E}{2}$. Then the expected number of vertices selected from $\cC$ is at most $\frac{3M^E}{4}$. Then, by the Chernoff bound, the probability that the number of vertices selected from $\cC$ is $M^E$ is at most $e^{-\frac{M^E}{36}}$. Taking $C \geq 118$, we get with probability at least $1-\frac{1}{n^3}$, the number of vertices chosen from outside $\cC$ is at least $\frac{M^E}{2}$.  Thus, $p_-$ is estimated based on at least $\frac{(M^E)^2}{2}$ edges. 
  
 Here, we use the following version of the Chrenoff bound\footnote{note that the version of the Chernoff bound also holds for sampling without replacement, which is the case here \cite{hoeffding1963probability}.}.
 
 \begin{lemma}[The Chernoff Bound] Let $X_1, X_2,...., X_n$ be independent random variable taking values in $\{0,1\}$ with $E[X_i]=p_i$. Let $X=\sum_{i=1}^{n} X_i$, and $\mu =E[X]$. Then the following holds
 \begin{enumerate}
 \item For $0 < \delta \leq 1$, $Pr[X \leq (1-\delta)\mu] \leq e^{-\mu\delta^2/2}$
 \item For $0 < \delta \leq 1$, $Pr[X \geq (1+\delta)\mu] \leq e^{-\mu\delta^2/3}$
 \end{enumerate}
 \end{lemma}

(a)  Let $M={{M^E}\choose{2}}\geq \frac{(M^E)^2}{3}$. Now, select $\delta=\sqrt{\frac{C'\log{n}}{M}}$, where $C'$ is a constant that ensures $n^{2C'} \geq n^{\frac{8\sqrt{C'}}{27\sqrt{3}}-6} \geq (M+1)^{q}\approx (M^E+1)^{2q}$, also $C' \geq 3$.
 \begin{align*}
 \Pr\Big(\cH(p_+ \| f_+) \geq \delta\Big) &=f_+\Big(\{ p_+: \cH(p_+\|f_+) \geq \delta\Big)\\
 & \hspace{-2in} = (M+1)^{q} \exp(-M \underset{p: \cH(p\| f_+) \ge \delta}\min D(p\| f_+)),
\end{align*}

Here in the last step we have used Sanov's theorem (see, Lemma \ref{thm:sanov}). Using the relationship between KL-divergence and Hellinger distance, we get
\begin{align*}
D(p\| f_+) \geq 2\cH^2(p\|f_+) \geq 2\delta^2
\end{align*}
where in the last step we used the optimization condition under the Sanov's theorem. Setting $\delta=\sqrt{\frac{C'\log{n}}{M}}$, $M \geq \frac{(M^E )^2}{3}=\frac{C^2\log^2{n}}{3\cH(p_+\|p_-)^4}$, we get $\delta=\frac{\sqrt{3C'}\cH(p_+\|p_-)^2}{C\sqrt{\log{n}}}$.
Let us take $B'=\frac{C}{\sqrt{3C'}}$, and $B=\sqrt{\frac{C}{C'}}$, we have $B \leq B'$ and we get
\begin{align*}
 \Pr\Big(\cH(p_+ \| f_+) \geq  \frac{2\cH(p_+\|p_-)^2}{B'\sqrt{\log{n}}}\Big) \leq \frac{1}{n^3}
  \end{align*}
  
  Hence,
  \begin{align*}
 \Pr\Big(\cH(p_+ \| f_+) \geq  \frac{2\cH(p_+\|p_-)^2}{B\sqrt{\log{n}}}\Big) \leq \frac{1}{n^3}
  \end{align*}
 
 (b) Following a similar argument as above, we get
 \begin{align*}
 \Pr\Big(\cH(p_- \| f_-) \geq  \frac{2\cH(p_+\|p_-)^2}{B\sqrt{\log{n}}}\Big) \leq \frac{1}{n^3}
 \end{align*}
 
 (c) Now 
 \begin{align*}
 \cH(f_+ \|f_-) & \geq \cH(p_+\|p_-)- \cH(p_+\|f_+)-\cH(p_-\|f_-) ~~\text{ by applying triangle inequality }\\
 & \geq \cH(p_+\|p_-)-\frac{4\cH(p_+\|p_-)^2}{B\sqrt{\log{n}}} ~~\text{ from (a) and (b) with probability at least $1-\frac{2}{n^3}$}\\
 &=\cH(p_+\|p_-)\Big(1-\frac{4\cH(p_+\|p_-)}{B\sqrt{\log{n}}}\Big)
 \end{align*}
 Similarly, 
  \begin{align*}
  \cH(p_+\|p_-) &\geq \cH(f_+ \| f_-)-\cH(p_+\|f_+)-\cH(p_-\|f_-)~~\text{by triangle inequality}\\
  &\geq  \cH(f_+ \| f_-)-\frac{4\cH(p_+\|p_-)^2}{B\sqrt{\log{n}}}~~\text{from (a) and (b) with probability at least $1-\frac{2}{n^3}$}
  \end{align*}
  
Hence, by union bound all of (a), (b) and (c) hold with probability at least $1-\frac{6}{n^3}$.  
 \end{proof}
   
 \begin{lemma}
 \label{lemma:correct2}
 Let $\cC$ be a cluster considered in Phase $3$ of size at least $M^E$ then the following holds with probability at least $1-o_{n}(1)$.
 
 (a) If ${\sf Membership}(v,\cC) > -(\frac{\cH(p_+\|p_-)}{B}-\frac{2\cH(p_+\|p_-)^2}{B\sqrt{\log{n}}})$ then $v$ is in $\cC$ 
 
 (b) If $v \in \cC$ then ${\sf Membership}(v,\cC) \geq -(\frac{\cH(p_+\|p_-)}{B}+\frac{2\cH(p_+\|p_-)^2}{B\sqrt{\log{n}}})$
 
 \end{lemma}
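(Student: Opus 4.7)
My plan is to show that, under the high-probability event of Lemma~\ref{lemma:correct1}, the empirical distribution $p_{v,\cC}$ concentrates tightly around $f_+$ (if $v\in\cC$) or $f_-$ (otherwise) while the intra-distribution $p_\cC$ concentrates around $f_+$ at an even faster rate, so that $\cH(p_{v,\cC}\|p_\cC)$ is either $o(\cH(p_+\|p_-))$ or within a vanishing multiplicative factor of $\cH(f_+\|f_-)\approx\cH(p_+\|p_-)$; the claimed separation of membership scores then follows by squaring. Throughout I rely on the structural observation that any cluster $\cC$ entering Phase~3 is a genuine subset of a true latent cluster---every addition made in Phases~1 and 2 is validated by an oracle query---so the $\binom{|\cC|}{2}$ entries feeding $p_\cC$ really are i.i.d.\ from $f_+$.

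The first step is to apply Sanov's theorem (Lemma~\ref{thm:sanov}) to $p_{v,\cC}$, the empirical distribution of $|\cC|\ge M^E=C\log n/\cH(p_+\|p_-)^2$ i.i.d.\ samples drawn from some $f_\star\in\{f_+,f_-\}$ depending on whether $v\in\cC$. Combining the Sanov exponent with the standard inequality $D(p\|q)\ge 2\cH^2(p\|q)$, I obtain for $\delta$ of order $\cH(p_+\|p_-)/\sqrt{C}$ that
\[
\Pr\bigl(\cH(p_{v,\cC}\|f_\star)\ge\delta\bigr)\le (M^E+1)^q\exp(-2M^E\delta^2)\le n^{-3},
\]
provided $C$ is taken sufficiently large relative to the alphabet size $q$. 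The same argument applied to the $\binom{|\cC|}{2}=\Theta((M^E)^2)$ intra-cluster samples yields $\cH(p_\cC\|f_+)\le\delta'$ with $\delta'=O(\cH(p_+\|p_-)^2/\sqrt{C\log n})$ with probability at least $1-n^{-3}$; the extra $1/\sqrt{\log n}$ factor gained here is precisely what produces the second-order correction $\cH(p_+\|p_-)^2/\sqrt{\log n}$ appearing in the statement.

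The second step combines these concentration bounds with the triangle inequality for Hellinger distance and Lemma~\ref{lemma:correct1}(c). For part~(b), if $v\in\cC$ then $\cH(p_{v,\cC}\|p_\cC)\le\cH(p_{v,\cC}\|f_+)+\cH(f_+\|p_\cC)\le\delta+\delta'$, and squaring gives the claimed upper bound on $\cH^2(p_{v,\cC}\|p_\cC)=-{\sf Membership}(v,\cC)$ once the constant $B$ is tuned. For the contrapositive of part~(a), assuming $v\notin\cC$, the reverse triangle inequality together with Lemma~\ref{lemma:correct1}(c) yields
\[
\cH(p_{v,\cC}\|p_\cC)\ge\cH(f_+\|f_-)-\delta-\delta'\ge\cH(p_+\|p_-)\bigl(1-O(1/\sqrt{C})-O(1/\sqrt{\log n})\bigr),
\]
so after squaring $\cH^2(p_{v,\cC}\|p_\cC)$ exceeds the threshold in~(a), contradicting its hypothesis.

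The main obstacle will be the book-keeping of constants: the leading $\cH(p_+\|p_-)/B$ term in the threshold has to absorb the first-order deviation $\delta\sim\cH(p_+\|p_-)/\sqrt{C}$ of $p_{v,\cC}$ from its target, while the second-order term $\cH(p_+\|p_-)^2/(B\sqrt{\log n})$ absorbs $\cH(p_\cC\|f_+)$ together with the slack between $\cH(f_+\|f_-)$ and $\cH(p_+\|p_-)$ coming from Lemma~\ref{lemma:correct1}(c). Choosing $B$ suitably small relative to $\sqrt{C/C'}$ and then union-bounding over the at most $n$ candidate vertices $v$ and the at most $k\le n$ grown clusters $\cC$ handled during each of the polynomially many Phase~3 invocations drives the total failure probability to $o_n(1)$, matching the stated guarantee.
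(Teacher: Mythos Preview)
Your concentration step for $p_{v,\cC}$ via Sanov is the same as the paper's, and your structural observation that $\cC$ is a genuine subcluster is also used implicitly there. The difference---and the place where your argument breaks---is in how you interpret ${\sf Membership}(v,\cC)$ and hence how the final comparison with the threshold goes.

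In the paper's proof of this lemma the operative definition is ${\sf Membership}(v,\cC)=-\cH(p_{v,\cC}\|p_+)$: unsquared Hellinger distance, and measured against the \emph{global} estimate $p_+$, not against the cluster-specific $p_\cC$. (The paper is admittedly not consistent across sections on this point, but the thresholds in the lemma statement, being of order $\cH(p_+\|p_-)/B$, force this reading.) With that interpretation, the paper's argument for the contrapositive of (a) is: from $\cH(p_{v,\cC}\|p_+)<\cH(p_+\|p_-)/B-2\cH(p_+\|p_-)^2/(B\sqrt{\log n})$ and Lemma~\ref{lemma:correct1}(a) one gets $\cH(p_{v,\cC}\|f_+)<\cH(p_+\|p_-)/B$; then the reverse triangle inequality together with Lemma~\ref{lemma:correct1}(c) gives $\cH(p_{v,\cC}\|f_-)>(1-1/B-4/(B\sqrt{\log n}))\cH(p_+\|p_-)>\cH(p_+\|p_-)/B$ for $B>6$, contradicting the Sanov bound conditioned on $v\notin\cC$. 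Everything lives at the scale $\cH(p_+\|p_-)$ and the constants close.

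Your route instead squares at the end, and this is where part (a) fails. If $v\notin\cC$ you correctly obtain $\cH(p_{v,\cC}\|p_\cC)\ge\cH(p_+\|p_-)(1-O(1/\sqrt C)-O(1/\sqrt{\log n}))$, but after squaring this only gives $\cH^2(p_{v,\cC}\|p_\cC)\gtrsim\cH^2(p_+\|p_-)$. To contradict the hypothesis of (a) under your reading of Membership you would need this to exceed $\cH(p_+\|p_-)/B-O(\cH(p_+\|p_-)^2/\sqrt{\log n})$, i.e.\ essentially $\cH(p_+\|p_-)\gtrsim 1/B$. That is false in the regime of interest (small Hellinger gap), so the contradiction does not go through. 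Your part (b) happens to survive because there the squared bound $(\delta+\delta')^2\lesssim\cH^2(p_+\|p_-)/C$ is comfortably below $\cH(p_+\|p_-)/B$, but that is an accident of direction; the two parts are meant to sandwich the same threshold, and your bounds do not. The fix is to drop the squaring and compare $\cH(p_{v,\cC}\|p_+)$ directly to the threshold, using Lemma~\ref{lemma:correct1}(a) in place of your separate Sanov bound on $p_\cC$.
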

 \begin{proof}
 Suppose $v \in \cC$. Then for any $\delta >0$, we have
 \begin{align*}
 \Pr\Big(\cH(p_{v,\cC} \| f_+) > \delta \mid v \in \cC\Big) &=f_+\Big(\cH(p_{v,\cC} \| f_+) > \delta \Big)\\
 &\leq (M^E+1)^q \exp(-M^E \underset{p: \cH(p\| f_+) \ge \delta}\min D(p\| f_+)) ~~\text{(by Sanov's theorem)}\\
 &\leq (M^E+1)^q \exp(-M^E \underset{p: \cH(p\| f_+) \ge \delta}\min 2\cH^2(p\|f_+)) \\
 &~~\text{(noting the relationship between KL-divergence and Hellinger distance)}\\
 &\leq (M^E+1)^q \exp(-2 M^E \delta^2)
 \end{align*}
 
 Setting $M^E\delta^2=C'\log{n}$, we get $\delta=\sqrt{\frac{C'\log{n}}{M^E}}=\sqrt{\frac{C'}{C}}\cH(p_+\|p_-) = \frac{\cH(p_+\|p_-)}{B}$ (by noting the value of $B$), we get
 
 \begin{align*}
 \Pr\Big(\cH(p_{v,\cC} \| f_+) > \frac{\cH(p_+\|p_-)}{B}  \mid v \in \cC\Big) \leq \frac{1}{n^3}~~~ \text{(by noting the value of $C'$)}
 \end{align*}
 
 Similarly, 
 \begin{align*}
 \Pr\Big(\cH(p_{v,\cC} \| f_-) > \frac{\cH(p_+\|p_-)}{B}  \mid v \not \in \cC\Big) \leq \frac{1}{n^3}
 \end{align*}
 
 Therefore, with at least $1-\frac{2}{n^2}$ probability (by applying union bound over all $v$ the following hold.
 (i) If $v \in \cC$ then $\cH(p_{v,\cC} \| f_+) < \frac{\cH(p_+\|p_-)}{B}$ and
 (ii) If $v \not \in \cC$ then $\cH(p_{v,\cC} \| f_-) < \frac{\cH(p_+\|p_-)}{B}$.
 
 (a)~We have ${\sf Membership}(v,\cC) > -(\frac{\cH(p_+\|p_-)}{B}-\frac{2\cH(p_+\|p_-)^2}{B\sqrt{\log{n}}})$, that is $\cH(p_{v,\cC} \|p_+) < \frac{\cH(p_+\|p_-)}{B}-\frac{2\cH(p_+\|p_-)^2}{B\sqrt{\log{n}}}$.  Suppose if possible $v \not \in \cC$. Then, we have
 \begin{align*}
 \cH(p_{v,\cC} \| f_+)& \leq \cH(p_{v,\cC}\|p_+)+\cH(p_+\|f_+) ~~\text{by triangle inequality}\\
& < \frac{\cH(p_+\|p_-)}{B}-\frac{2\cH(p_+\|p_-)^2}{B\sqrt{\log{n}}}+\cH(p_+\|f_+) ~~\text{applying condition on ${\sf Membership}(v,\cC)$}\\
& \leq \frac{\cH(p_+\|p_-)}{B} ~~\text{from Lemma \ref{lemma:correct1} (a) with probability at least $1-\frac{1}{n^3}$}
 \end{align*}
 
 Then we have,
 \begin{align*}
 \cH(p_{v,\cC} \| f_-)&\geq \cH(f_+\|f_-)-\cH(p_{v,\cC} \| f_+) ~~\text{by triangle inequality}\\
 & \geq \cH(p_+\|p_-)-\frac{4\cH(p_+\|p_-)^2}{B\sqrt{\log{n}}}-\cH(p_{v,\cC} \| f_+) ~~\text{from Lemma \ref{lemma:correct1} (c) with probability at least $1-\frac{2}{n^3}$}\\
 & \geq \Big(1-\frac{1}{B}\Big)\cH(p_+\|p_-)-\frac{4\cH(p_+\|p_-)^2}{B\sqrt{\log{n}}} ~~\text{with probability at least $1-\frac{3}{n^3}$}\\
 & \geq \Big(1-\frac{1}{B}-\frac{4}{B\sqrt{\log{n}}}\Big)\cH(p_+\|p_-)~~\text{since $\cH(p_+\|p_-) \leq 1$}\\
 & > \frac{\cH(p_+\|p_-)}{B} ~~\text{by taking $B > 6$, or $C \geq 36C'$}
 \end{align*}
 
 This contradicts that $v \not \in \cC$.

 (b)~Now assume $v \in \cC$ but ${\sf Membership}(v,\cC) \geq -(\frac{\cH(p_+\|p_-)}{B}+\frac{2\cH(p_+\|p_-)^2}{B\sqrt{\log{n}}})$, that is $\cH(p_{v,\cC} \|p_+)\geq \frac{\cH(p_+\|p_-)}{B}+\frac{2\cH(p_+\|p_-)^2}{B\sqrt{\log{n}}}$. We have
 \begin{align*}
 \cH(p_{v\cC}\|f_+) & \geq \cH(p_{v\cC}\|p_+) -\cH(f_+\|p_+) \\
 & \geq \frac{\cH(p_+\|p_-)}{B}+\frac{2\cH(p_+\|p_-)^2}{B\sqrt{\log{n}}}-\cH(f_+\|p_+)~~\text{ applying condition on ${\sf Membership}(v,\cC)$}\\
 & \geq \frac{\cH(p_+\|p_-)}{B} ~~\text{from Lemma \ref{lemma:correct1} (a) with probability at least $1-\frac{1}{n^3}$}
 \end{align*}
 This contradicts the fact that $v \in \cC$.
 \end{proof}
  \begin{corollary}
  \label{corollary:correct}
  Let $\cC$ be a cluster considered in Phase $3$ of size at least $M^E$ then the following hold with probability at least $1-\frac{2}{n^2}$.
  
 (a) Vertices that are included in $\cC$ in Phase $(3A)$ truly belong to $\cC$.
 
 (b) Vertices that are not in ${\sf Waiting}(\cC)$ can not be in $\cC$.
 \end{corollary}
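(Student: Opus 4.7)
The plan is to derive both parts of the corollary as essentially immediate consequences of Lemma~\ref{lemma:correct2}, combined with a union bound over the candidate unclustered vertices to produce the stated failure probability $1-\tfrac{2}{n^2}$.

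For part~(a), I would observe that the admission rule used by Phase~(3A)---namely, including $v$ in $\cC$ exactly when ${\sf Membership}(v,\cC)$ exceeds the upper threshold of the form $-\bigl(\tfrac{\cH(p_+\|p_-)}{B}-\tfrac{2\cH(p_+\|p_-)^2}{B\sqrt{\log n}}\bigr)$---is precisely the hypothesis of Lemma~\ref{lemma:correct2}(a), once the constants appearing in the two threshold expressions are matched via the relation $B=\sqrt{C/C'}$ fixed in Lemma~\ref{lemma:correct1}. Hence, on the event that the conclusion of Lemma~\ref{lemma:correct2}(a) holds for $v$, every such $v$ placed into $\cC$ during Phase~(3A) must truly lie in $\cC$.

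For part~(b), I would argue by contrapositive. Fix a vertex $v$ that genuinely belongs to $\cC$. By Lemma~\ref{lemma:correct2}(b), with probability at least $1-\tfrac{1}{n^3}$ its score satisfies ${\sf Membership}(v,\cC) \ge -\bigl(\tfrac{\cH(p_+\|p_-)}{B}+\tfrac{2\cH(p_+\|p_-)^2}{B\sqrt{\log n}}\bigr)$, which sits above the lower cutoff used by Phase~3 to decide inclusion in ${\sf Waiting}(\cC)$. Consequently $v$ is either admitted into $\cC$ in Phase~(3A) or placed into ${\sf Waiting}(\cC)$. Taking the contrapositive, any vertex that is neither admitted in Phase~(3A) nor placed in ${\sf Waiting}(\cC)$ cannot be a true member of $\cC$, which is exactly the claim.

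The probabilistic bookkeeping is routine: Lemma~\ref{lemma:correct2} provides each of its two conclusions at a single vertex with failure probability at most $\tfrac{1}{n^3}$, so a union bound over both parts and the at most $n$ unclustered vertices yields the overall failure probability of at most $\tfrac{2}{n^2}$ stated in the corollary. No new probabilistic input is needed beyond the preceding lemmas. The only mild obstacle worth flagging is arithmetic rather than conceptual: one must verify that the threshold expressions written in terms of the constant $C$ in the description of Phase~3 actually coincide with the thresholds written in terms of $B$ in Lemma~\ref{lemma:correct2}, after which the corollary is just the deterministic implication of the high-probability event guaranteed by that lemma.
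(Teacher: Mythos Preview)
Your proposal is correct and follows the same route as the paper: the paper's proof is the one-liner ``Follows from Lemma~\ref{lemma:correct2} (a) and (b) respectively,'' and your write-up simply unpacks that implication, including the contrapositive reading for part~(b) and the union-bound bookkeeping already carried out inside the proof of Lemma~\ref{lemma:correct2}. The only caveat you raise---matching the constants $C$ and $B$ in the two threshold expressions---is a notational inconsistency in the paper rather than a gap in your argument.
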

 \begin{proof}
 Follows from Lemma \ref{lemma:correct2} (a) and (b) respectively.
 \end{proof}
 
 \begin{lemma}
 \label{lemma:anti1}
  Let $\cC$ be a cluster considered in Phase $3$ of size at least $M^E$ and $\hat{\cC}$ denotes the true cluster with $\cC \subseteq \hat{\cC}$. Then after Phase $(3A)$, $|\hat{\cC} \setminus \cC|=o(1)$ with probability at least $1-\frac{1}{n^2}$.
  \end{lemma}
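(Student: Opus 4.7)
The plan is to argue that every vertex $v \in \hat{\cC} \setminus \cC$ remaining at the start of Phase (3A) is in fact captured by the (3A) inclusion test, so that after (3A) no element of $\hat{\cC}$ lies outside $\cC$. Fix any such $v$. Because $v \in \hat{\cC}$ and $\cC \subseteq \hat{\cC}$, the side-information values $\{w_{u,v}\}_{u \in \cC}$ are i.i.d.\ draws from $f_+$, so $p_{v,\cC}$ concentrates around $f_+$. Combining this with the fact that $p_\cC$ is close to $p_+$ (hence to $f_+$ via Lemma \ref{lemma:correct1}), Lemma \ref{lemma:correct2}(b) applied at $v$ yields
\[
{\sf Membership}(v,\cC) \;\geq\; -\Bigl(\tfrac{\cH(p_+\|p_-)}{B}+\tfrac{2\cH(p_+\|p_-)^2}{B\sqrt{\log n}}\Bigr)
\]
with probability at least $1-1/n^3$. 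Provided the Phase (3A) threshold is calibrated so that this lower bound exceeds $-\bigl(\tfrac{4\cH(p_+\|p_-)}{C}-\tfrac{2\cH(p_+\|p_-)^2}{C\sqrt{\log n}}\bigr)$, the inclusion test fires and $v$ is added to $\cC$ without any query.

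Next I take a union bound over the at most $|\hat{\cC}| \leq n$ candidate vertices, which turns the per-vertex failure into an overall failure of probability $1/n^2$ for the event that some vertex in $\hat{\cC}\setminus\cC$ escapes the (3A) test. Intersecting this with the high-probability-correct estimate event from Lemma \ref{lemma:correct1} (which is what legitimizes replacing $f_+,f_-$ by $p_+,p_-$ inside the threshold), I conclude that with probability at least $1-1/n^2$ every element of $\hat{\cC}\setminus\cC$ is absorbed into $\cC$ during (3A). Therefore $|\hat{\cC} \setminus \cC| = 0$, which is in particular $o(1)$.

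The main obstacle, and where the bulk of the care goes, is the constant calibration: the worst-case negative drop in ${\sf Membership}$ for a true cluster member (of order $\cH(p_+\|p_-)/B$) must be strictly dominated by the Phase (3A) inclusion threshold (of order $4\cH(p_+\|p_-)/C$), uniformly after accounting for the lower-order $\cH(p_+\|p_-)^2/\sqrt{\log n}$ sampling-error terms on both sides. This amounts to checking that $C$, chosen in Phase 1, is large enough relative to $B=\sqrt{C/C'}$ and $C'\geq 3$ from Lemma \ref{lemma:correct1} that the two threshold bands have a strictly negative slack of order $\cH(p_+\|p_-)^2/\sqrt{\log n}$, which absorbs the concentration error for every $v \in \hat{\cC}\setminus\cC$ simultaneously. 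Once that calibration is pinned down, the inclusion argument plus the union bound yield the claimed bound.
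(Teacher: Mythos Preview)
Your approach has a genuine gap at exactly the ``calibration'' step you flag as the main obstacle, and it cannot be closed with the algorithm and constants as given.

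Lemma~\ref{lemma:correct2}(b) only guarantees, for a true member $v\in\hat{\cC}$, that
\[
{\sf Membership}(v,\cC)\ \ge\ -\Bigl(\tfrac{\cH(p_+\|p_-)}{B}+\tfrac{2\cH(p_+\|p_-)^2}{B\sqrt{\log n}}\Bigr).
\]
This is precisely the \emph{lower} boundary of the grey band (the ${\sf Waiting}(\cC)$ threshold in (3B)), not the (3A) inclusion threshold, which sits at the \emph{upper} boundary $-\bigl(\tfrac{\cH(p_+\|p_-)}{B}-\tfrac{2\cH(p_+\|p_-)^2}{B\sqrt{\log n}}\bigr)$ (writing $4/C$ as $1/B$ in the paper's notation). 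So Lemma~\ref{lemma:correct2}(b) only places $v$ in ``(3A) $\cup$ Waiting'', never in (3A) alone. Your needed inequality would force the lower boundary of the grey band above the upper boundary, which is false by construction. Concretely, with $B=\sqrt{C/C'}$, $C'\ge 3$, and $C$ large (the paper uses $C\ge 36C'$ and $C\ge 118$), one has $4/C<\sqrt{C'/C}=1/B$, so $4\cH/C<\cH/B$; the slack goes the wrong way and no choice of the paper's constants repairs it. You also cannot lower the (3A) threshold to make room: that threshold is fixed by the algorithm, and Lemma~\ref{lemma:correct2}(a)/Corollary~\ref{corollary:correct}(a) (no false inclusions) relies on it being where it is.

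The paper's proof takes a different route: it does \emph{not} reuse Lemma~\ref{lemma:correct2}(b). Instead it runs a fresh Sanov bound directly at the (3A) boundary, i.e.\ it upper-bounds
\[
\Pr\Bigl(\cH(p_{v,\cC}\|p_+)\ \ge\ \tfrac{\cH(p_+\|p_-)}{B}-\tfrac{2\cH(p_+\|p_-)^2}{B\sqrt{\log n}}\ \Big|\ v\in\hat{\cC}\Bigr),
\]
translates to a deviation of $p_{v,\cC}$ from $f_+$ via Lemma~\ref{lemma:correct1}(a), and shows the resulting KL exponent is still $\Omega(\log n)$ despite the $O\bigl(\cH^2/\sqrt{\log n}\bigr)$ shift. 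That yields a per-vertex probability $O(1/n^3)$ of landing in the grey region, hence expected count $O(1/n^2)$, and a Markov argument finishes. The conceptual difference is that the paper bounds the mass of the grey band for true members, rather than trying to show the band is empty via a pre-existing concentration lemma whose threshold sits on the wrong side of it.
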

  \begin{proof}
  We have from Lemma \ref{lemma:correct2} that for $v$ to belong to $\hat{\cC}$, it must satisfy  ${\sf Membership}(v,\cC) \geq -(\frac{\cH(p_+\|p_-)}{B}+\frac{2\cH(p_+\|p_-)^2}{B\sqrt{\log{n}}})$. On the otherhand, if $v$ has ${\sf Membership}(v,\cC) > -(\frac{\cH(p_+\|p_-)}{B}-\frac{2\cH(p_+\|p_-)^2}{B\sqrt{\log{n}}})$ then $v$ has already been included in $\cC$. Therefore, the grey region of ${\sf Membership}(v,\cC)$ values for which we cannot decide on whether or not to include $v$ to $\cC$ is when ${\sf Membership}(v,\cC) \in -\frac{\cH(p_+\|p_-)}{B} \pm \frac{2\cH(p_+\|p_-)^2}{B\sqrt{\log{n}}}$, that is $\cH(p_{v,\cC} \| p_+) \in \frac{\cH(p_+\|p_-)}{B} \pm \frac{2\cH(p_+\|p_-)^2}{B\sqrt{\log{n}}}$.
  
  Now,
\begin{align*}
& \Pr\Big(\cH(p_{v,\cC} \| p_+) \in \frac{\cH(p_+\|p_-)}{B} \pm \frac{2\cH(p_+\|p_-)^2}{B\sqrt{\log{n}}}\Big) \leq \Pr\Big(\cH(p_{v,\cC} \| p_+) \geq \frac{\cH(p_+\|p_-)}{B} - \frac{2\cH(p_+\|p_-)^2}{B\sqrt{\log{n}}}\Big)\\
& \leq (M^E+1)^q \exp\Big(-M^E \underset{p: \cH(p\| p_+) \geq \frac{\cH(p_+\|p_-)}{B} - \frac{2\cH(p_+\|p_-)^2}{B\sqrt{\log{n}}} }\min D(p\| f_+)\Big)~~\text{by Sanov's theorem}
\end{align*}
  
  Now, 
  \begin{align*}
  & D(p\| f_+) \geq 2\cH(p \| f_+)^2  \geq 2\Big( \cH(p\|p_+) - \cH(p_+\|f_+)\Big)^2 ~~\text{by triangle inequality}\\
  & \geq 2\Big( \frac{\cH(p_+\|p_-)}{B} - \frac{2\cH(p_+\|p_-)^2}{B\sqrt{\log{n}}} - \cH(p_+\|f_+)\Big)^2~~\text{from the optimization condition}\\
  & \geq 2\Big( \frac{\cH(p_+\|p_-)}{B} - \frac{4\cH(p_+\|p_-)^2}{B\sqrt{\log{n}}}\Big)^2~~\text{from Lemma~\ref{lemma:correct1} (a) with probability at least $1-\frac{1}{n^3}$}\\
  &=\frac{2\cH^2(p_+\|p_-)}{B^2}\Big(1-\frac{4\cH(p_+\|p_-)}{B}\Big)^2\geq \frac{2\cH^2(p_+\|p_-)}{B^2}\Big(1-\frac{4}{B}\Big)^2\\
  &\geq \frac{2\cH^2(p_+\|p_-)}{27 B}~~\text{by inserting the minimum value for $\frac{1}{B}\Big(1-\frac{4}{B}\Big)^2$}
  \end{align*}
  
  Now $M^E \geq \frac{C\log{n}}{\cH(p_+\|p_-)^2}$. Hence,
  \begin{align*}
& \Pr\Big(\cH(p_{v,\cC} \| p_+) \in \frac{\cH(p_+\|p_-)}{B} \pm \frac{2\cH(p_+\|p_-)^2}{B\sqrt{\log{n}}}\Big) \\
& \leq (M^E+1)^q \exp(-\frac{2C}{27B}\log{n})+\frac{1}{n^3}=(M^E+1)^q \exp(-\frac{4\sqrt{C'}}{27\sqrt{3}}\log{n})+\frac{1}{n^3} \leq \frac{2}{n^3}\end{align*}

Hence the expected number of vertices $v \in \cC$ in the grey region is $\leq \frac{2}{n^2}$. Thus by simple Markov inequality, after Phase $(3A)$, the probability that $|\hat{\cC} \setminus \cC| \geq 4$ is at most $\frac{1}{2n^2}$. Hence, with probability at least $1-\frac{1}{2n^2}$, the size is bounded by $4$.

  \end{proof}
  
  \begin{lemma}
  \label{lemma:query-complexity}
  The algorithm asks at most $O(\frac{k^2\log{n}}{\cH(f_+\|f_-)^2})$ queries over the three phases with probability $1-o_{n}(1)$.
  \end{lemma}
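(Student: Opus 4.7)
The plan is to partition queries by phase and bound each piece separately. Phase 3A issues no queries, and Phase 1 contributes $O(k^2 \log n)$ by Observation~\ref{observe:1}, which is already dominated by the target since $\cH(f_+\|f_-) \leq 1$. Throughout I will work on the high-probability event guaranteed by Lemma~\ref{lemma:correct1}, on which the converged threshold satisfies $M^E = \Theta(\log n / \cH^2(f_+\|f_-))$; all failure probabilities are collected by a union bound at the end.

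For Phase~2, every vertex sampled triggers at most one query per existing cluster, hence at most $k$ queries per sample. I will therefore bound the total number of Phase~2 samples, accumulated across all interleaved restarts of Phase~2. Classify each such sample by the cluster it joins: (a) a cluster that is not yet grown but will eventually grow to size $\geq M^E$, (b) a cluster that never grows, or (c) a cluster that is already grown at the time of sampling. For case (a), Phase~2 exits the instant any cluster first crosses $M^E$, so that cluster received at most $M^E$ members via Phase~2 before triggering the exit; summing over the $\leq k$ ever-grown clusters gives $kM^E$. For case (b), the cluster's final size is strictly less than $M^E$ and there are at most $k$ such clusters, so $kM^E$ again suffices. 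Case (c) is void with high probability: by Corollary~\ref{corollary:correct}(a) and Lemma~\ref{lemma:correct2}(b), every vertex truly belonging to a grown cluster is already absorbed in Phase~3A or resolved in Phase~3B, so a remaining unclustered vertex almost surely does not join a grown cluster. Hence the total Phase~2 samples are $O(kM^E)$ and Phase~2 queries are $O(k^2 M^E) = O(k^2 \log n / \cH^2(f_+\|f_-))$.

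For Phase~3B, I need to bound $\sum_{\cC \text{ grown}} |{\sf Waiting}(\cC)|$, since each such vertex again uses at most $k$ queries. A vertex $v$ enters ${\sf Waiting}(\cC)$ only if $\cH(p_{v,\cC}\|p_+)$ falls in a narrow grey band around $\cH(p_+\|p_-)/B$. Split by whether $v$ belongs to the true cluster $\hat{\cC}$ containing $\cC$: for $v \in \hat{\cC}$, Lemma~\ref{lemma:anti1} already bounds the count at $O(1)$ with probability $\geq 1 - 1/(2n^2)$; for $v \notin \hat{\cC}$, the relevant entries of the side information are i.i.d.\ from $f_-$, so $\cH(p_{v,\cC}\|p_+)$ concentrates near $\cH(f_-\|f_+) \approx \cH(p_+\|p_-)$ (via Lemma~\ref{lemma:correct1}(c)), which lies above the grey band by a constant multiplicative factor since $B \geq 6$. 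A Sanov-type bound mirroring the proof of Lemma~\ref{lemma:anti1} but anchored at $f_-$ then yields a per-outsider grey-band probability of $O(1/n^3)$, so at most $O(1)$ outsiders enter each ${\sf Waiting}(\cC)$ with high probability. Summing over the $\leq k$ grown clusters gives $O(k)$ vertices in total, contributing $O(k^2)$ queries.

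The main obstacle is the bookkeeping in the Phase~2 bound, namely ruling out case (c) after each Phase~2 restart so that never-grown and soon-to-grow clusters jointly absorb all Phase~2 samples; this hinges on Corollary~\ref{corollary:correct} holding for every grown cluster processed so far, and is handled by union-bounding Lemma~\ref{lemma:correct1}, Lemma~\ref{lemma:correct2}, Lemma~\ref{lemma:anti1}, and Corollary~\ref{corollary:correct} over the at most $n$ vertices and at most $k$ Phase~2/Phase~3 rounds. Each per-event failure probability is $O(1/n^2)$, so the total failure probability is $o_n(1)$, delivering the combined bound $O(k^2 M^E) + O(k^2 \log n) + O(k^2) = O(k^2 \log n /\cH^2(f_+\|f_-))$ with probability $1 - o_n(1)$.
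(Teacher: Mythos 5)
Your proof follows essentially the same phase-by-phase decomposition as the paper: Phase~1 is $O(k^2\log n)$ by Observation~\ref{observe:1}; Phase~2 is controlled by the uniform bound $\cH(p_+\|p_-) \ge \cH(f_+\|f_-)/2$ from Lemma~\ref{lemma:correct1}(c), which keeps every cluster's Phase~2 size at $O(\log n/\cH^2(f_+\|f_-))$; and Phase~3 contributes $O(k^2)$. Your more explicit case analysis of Phase~2 samples (joining never-grown, about-to-grow, or already-grown clusters) is a cleaner bookkeeping of the same bound, with case~(c) ruled out exactly as the paper implicitly does via Corollary~\ref{corollary:correct}. The one place where you go meaningfully beyond the paper's own write-up is Phase~3B: the paper attributes the $O(k^2)$ bound solely to Lemma~\ref{lemma:anti1}, but that lemma (and its Sanov bound anchored at $f_+$) only controls vertices of $\hat\cC\setminus\cC$ — the true members — entering the grey band. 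Your additional Sanov-type argument anchored at $f_-$, showing that outsiders $v\notin\hat\cC$ concentrate at $\cH(p_{v,\cC}\|p_+)\approx\cH(f_+\|f_-)$, a constant factor above the grey band for $B\ge 6$, is precisely what is needed to bound $|{\sf Waiting}(\cC)|$ in full, and is a gap the paper glosses over. Your treatment is therefore correct and strictly more complete than the paper's own proof.
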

  \begin{proof}
  In Phase $1$, as seen from Observation \ref{observe:1}, the number of queries is $O(k^2\log{n}) \leq O(\frac{k^2\log{n}}{\cH(f_+\|f_-)^2})$, as $0\leq \cH(f_+\|f_-)^2 \leq 1$.
  
  In Phase $2$, from Lemma \ref{lemma:correct1}, at any time when we have a grown cluster
  \begin{align*}
  \cH(p_+\|p_-) &\geq \cH(f_+ \| f_-)-\cH(p_+\|f_+)-\cH(p_-\|f_-)~~\text{by triangle inequality}\\
  &\geq  \cH(f_+ \| f_-)-\frac{4\cH(p_+\|p_-)^2}{B\sqrt{\log{n}}}~~\text{from Lemma~\ref{lemma:correct1}}
  \end{align*}
  Therefore, 
  \begin{align*}
  \cH(p_+\|p_-) \geq \frac{\cH(f_+ \| f_-)}{1+\frac{4\cH(p_+\|p_-)}{B\sqrt{\log{n}}}}\geq \frac{\cH(f_+ \| f_-)}{2}
    \end{align*}
    
    This also shows whenever one cluster has grown to a size of $\frac{4C\log{n}}{\cH^2(f_+\|f_-)}$, then $M^E$ must cross the threshold based on the newest estimate of $p_+$ and $p_-$. Hence, Phase $2$ never grows a cluster beyond a size of $O(\frac{\log{n}}{\cH^2(f_+\|f_-)})$ with probability $1-\frac{1}{n^3}$. Hence, in Phase $2$, the total number of queries can be at most $O\Big(\frac{k^2\log{n}}{\cH^2(f_+ \| f_-)}\Big)$.
    
    In Phase $3$, the total number of queries made is at most $O(k^2)$ with probability at least $1-\frac{1}{2n}$ due to Lemma \ref{lemma:anti1}, and applying union bound over all the clusters.
    
    Thus, we get the overall query complexity is $O(\frac{k^2\log{n}}{\cH(f_+\|f_-)^2})$ with probability $1-o_{n}(1)$, where $o_{n}(1)$ denotes a function of $n$ that goes to $0$ with $n$.
  \end{proof}
  
Putting together all the lemmas, we arrive at the statement of Theorem  \ref{thm:div}.

\subsection{A Las Vegas Algorithm for  \cc~with an Oracle}
\label{sec:las-vegas}

 In this section, we design a Las Vegas algorithm for clustering with oracle. 

Recall that, our algorithm uses a subroutine called {\sf  Membership} that takes as input an element $v \in V$ and a subset of elements (cluster) $\cC\subseteq  V\setminus\{v\}.$
Assume that $f_+, f_-$ are discrete distributions over $q$ points $a_1, a_2, \dots, a_q$; that is $w_{i,j}$ takes value in the set $\{a_1, a_2, \dots, a_q\}.$
We defined the empirical ``inter'' distribution  $p_{v,\cC}$ for $i =1,\dots,  q,$
 $
 p_{v,\cC}(i) = \frac{1}{|\cC|} \cdot |\{u \in \cC: w_{u,v} = a_i \}|.
 $
 Also compute the ``intra'' distribution $p_{\cC}$  for $i =1,\dots,  q,$
 $
 p_{\cC}(i) = \frac{1}{|\cC|(|\cC|-1)} \cdot |\{(u,v) \in \cC \times \cC: u \ne v, w_{u,v} = a_i \}|.
 $
 Then we use {\sf Membership}($v, \cC$) = $-\cH^2( p_{v,\cC} \|  p_{\cC})$
as affinity of vertex $v$ to cluster $\cC$, where $\cH( p_{v,\cC} \|  p_{\cC})$ 
denotes the Hellinger divergence between distributions. 
Note that, since the membership is 
always negative, a higher membership implies that the `inter' and `intra' distributions are closer in terms
of Hellinger distance.

The algorithm works as follows. Let $\calC_1, \calC_2,...,\calC_l$ be the current clusters in nonincreasing order of size.  
We find the minimum index $j \in [1,l]$ such that there exists a vertex $v$ not yet clustered, with the highest average membership to $\calC_j$, that is {\sf  Membership}($v, \cC_j$) $ \geq $ {\sf  Membership}($v, \cC_{j'}$), $\forall j' \neq j$, and $j$ is the smallest index for which such a $v$ exists. We first check if $v \in \calC_j$ by querying $v$ with any current member of $\calC_j$. If not, then we group the clusters $\calC_1,\calC_2,..,\calC_{j-1}$ in at most $\lceil \log{n} \rceil$ groups such that clusters in group $i$ has size in the range $[\frac{|\calC_1|}{2^{i-1}}, \frac{|\calC_1|}{2^i})$. For each group, we pick the cluster which has the highest average membership with respect to $v$, and check by querying whether $v$ belongs to that cluster. Even after this, if the membership of $v$ is not resolved, then we query $v$ with one member of each of the clusters that we have not checked with previously. If $v$ is still not clustered, then we create a new singleton cluster with $v$ as its sole member.

The pseudocode of the algorithm is given in Figure~\ref{algo:cc-exact}
\begin{figure}
\caption{Pseudocode: Las Vegas Algorithm  for~\cc}
\label{algo:cc-exact}
\includegraphics[width=6in]{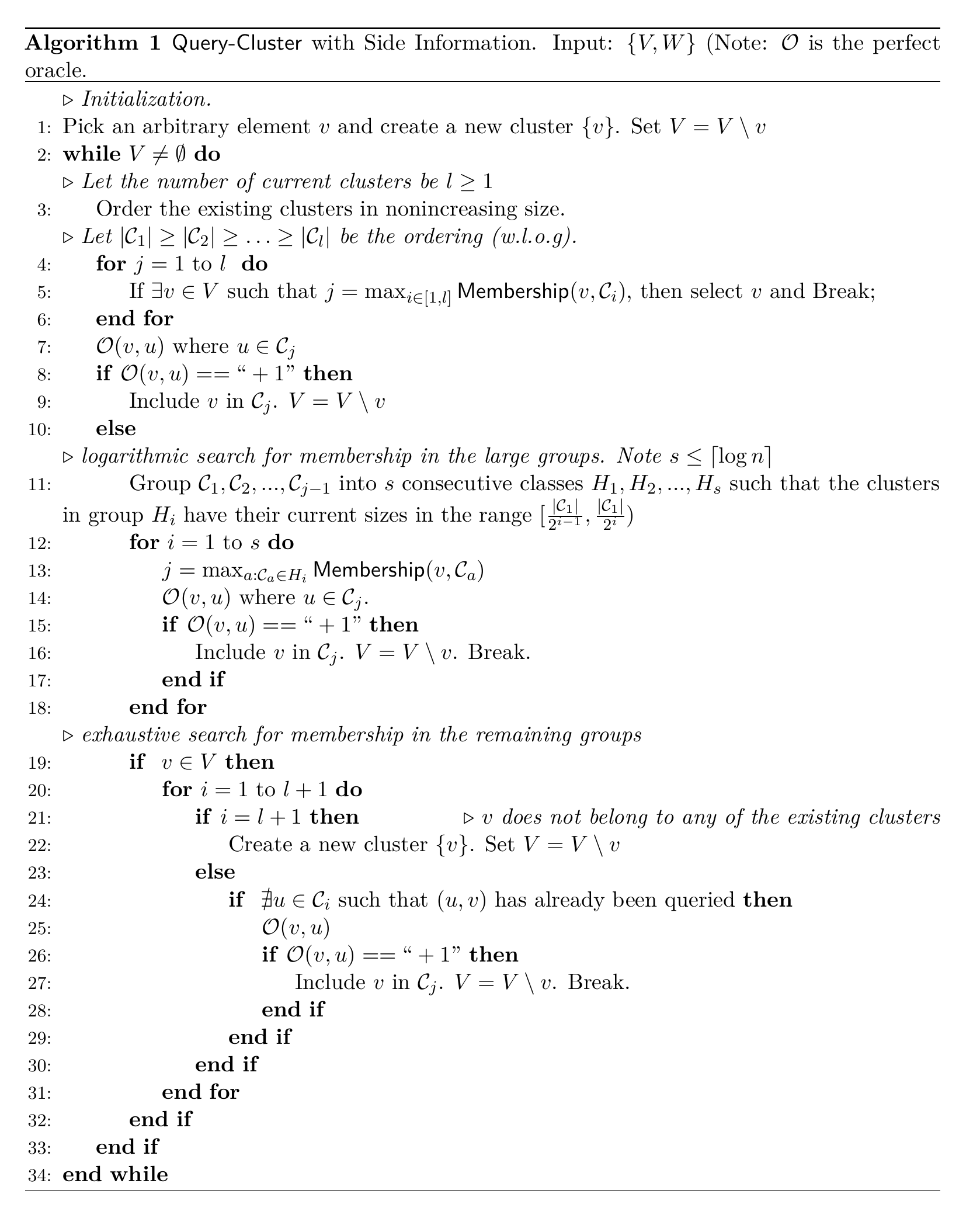}
\end{figure}
We now give a proof of the Las Vegas part of Theorem  \ref{thm:div} here using Algorithm \ref{algo:cc-exact}. We crucially use the following lemma which proves a strong concentration inequality adapting the Sanov's Theorem (see Lemma \ref{thm:sanov}) of information theory.

 \begin{lemma}\label{lem:unknown}
Suppose, $\cC, \cC' \subseteq V$, $\cC \cap \cC' = \emptyset$ and $|\cC| \ge M, |\cC'| \ge M =\frac{32 \log n}{ \cH^2(f_+\|f_-)}$. Then, 
$$
\Pr\Big({\sf Membership}(v, \cC') \ge {\sf Membership}(v, \cC) \mid v \in \cC\Big) \le \frac{2}{n^3}.
$$
\end{lemma}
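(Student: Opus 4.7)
The plan is to bound the bad event $\{\cH^2(p_{v,\cC}\|p_\cC) \ge \cH^2(p_{v,\cC'}\|p_{\cC'})\}$ by $2/n^3$. First I would observe that under $v \in \cC$ the four empirical distributions entering the two membership scores are built from disjoint sets of matrix entries---the indices $(u,v)$ for $u \in \cC$, the pairs inside $\cC \times \cC$, the indices $(u,v)$ for $u \in \cC'$, and the pairs inside $\cC' \times \cC'$ (and $v \notin \cC \cup \cC'$ since $v$ is the vertex being classified)---so the four empiricals are independent. Their target distributions are $f_+$, $f_+$, $f_-$, $f_+$ respectively, with sample counts $|\cC|-1$, $\binom{|\cC|}{2}$, $|\cC'|$, $\binom{|\cC'|}{2}$, each at least $M$.

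The next step is to apply Sanov's theorem (Lemma~\ref{thm:sanov}) to each empirical with the event $\{\cH(\hat p\|g) \ge \delta\}$, using the standard inequality $D(p\|g) \ge 2\cH^2(p\|g)$ to convert the $\min D$ in the exponent into $2\delta^2$. Choosing $\delta = \cH(f_+\|f_-)/4$ and plugging in $M = 32\log n/\cH^2(f_+\|f_-)$ makes the exponent $2M\delta^2 = 4\log n$, so each per-empirical failure probability is at most $(M+1)^q n^{-4}$. Because $q$ is a fixed constant, the polynomial prefactor is absorbed and a union bound over the four events gives total failure probability below $2/n^3$.

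On the complementary ``good'' event I would finish by two applications of the Hellinger triangle inequality: $\cH(p_{v,\cC}\|p_\cC) \le \cH(p_{v,\cC}\|f_+)+\cH(p_\cC\|f_+) \le 2\delta$ (both empiricals estimate $f_+$), and $\cH(p_{v,\cC'}\|p_{\cC'}) \ge \cH(f_+\|f_-) - \cH(p_{v,\cC'}\|f_-) - \cH(p_{\cC'}\|f_+) \ge \cH(f_+\|f_-) - 2\delta$ (reverse triangle, using that one empirical estimates $f_-$ and the other $f_+$). Squaring both bounds yields $\cH^2(p_{v,\cC}\|p_\cC) \le \cH^2(f_+\|f_-)/4 \le \cH^2(p_{v,\cC'}\|p_{\cC'})$, so ${\sf Membership}(v,\cC) \ge {\sf Membership}(v,\cC')$ on the good event.

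The main obstacle is getting a \emph{strict} Hellinger gap while keeping the Sanov exponent large enough to beat the polynomial prefactor and the $2/n^3$ budget. One clean way to absorb all slack is to exploit that the intra-cluster empiricals $p_\cC$ and $p_{\cC'}$ are averages of $\Omega(M^2)$ samples, so by Sanov they concentrate at rate $\exp(-\Omega(M^2\delta^2))$---double-exponentially tighter than needed---letting us treat $\cH(p_\cC\|f_+)$ and $\cH(p_{\cC'}\|f_+)$ as negligible and push $\delta$ strictly below $\cH(f_+\|f_-)/4$ in the inter estimates. The constant $32$ in the definition of $M$ is calibrated exactly so that both the Sanov exponent and the Hellinger separation clear their thresholds with a small safety margin when $q$ is fixed.
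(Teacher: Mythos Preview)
Your approach is essentially the same as the paper's: apply Sanov's theorem to the empirical distributions, convert the KL exponent to a Hellinger bound via $D \ge 2\cH^2$, and conclude with the triangle inequality for Hellinger distance. The paper organizes the argument slightly differently---it first splits the bad event as ``either $\cH(p_{v,\cC'}\|p_{\cC'}) \le \beta/2$ or $\cH(p_{v,\cC}\|p_\cC) \ge \beta/2$'' with $\beta = \cH(f_+\|f_-)/2$, and then decomposes each half via the triangle inequality into two Sanov-controllable events (one of which is of the form ``empirical close to the \emph{wrong} distribution'')---whereas you bound all four deviations $\cH(\hat p\|\text{truth})$ simultaneously and argue on the intersection of the good events. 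Your decomposition is arguably more symmetric and transparent; the paper's buys a slightly larger Sanov exponent for the $p_{v,\cC'}$ term but a smaller one elsewhere, so neither dominates. Your observation that the intra-cluster empiricals $p_\cC, p_{\cC'}$ are built from $\Omega(M^2)$ samples, and hence concentrate far more tightly than needed, is correct and gives you the slack to make the final Hellinger separation strict; the paper does not make this explicit and is equally loose about absorbing the $(M+1)^q$ prefactor.
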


\begin{proof}
Let $\beta = \frac{\cH(f_+\|f_-)}{2}$. If ${\sf Membership}(v, \cC') \ge {\sf Membership}(v, \cC)$ then we must have,
$\cH(p_{v,\cC'} \| p_{\cC'}) \le \cH(p_{v,\cC} \| p_{\cC}) .$ This means, either
$
\cH(p_{v,\cC'} \| p_{\cC'})  \le \frac{\beta}{2}
$
or $ \cH(p_{v,\cC} \| p_{\cC})  \ge \frac{\beta}{2}.$ Now, using triangle inequality, 
\begin{align*}
&\Pr\Big(\cH(p_{v,\cC'} \| p_{\cC'})   \le \frac{\beta}{2} \Big) \le \Pr\Big(\cH(p_{v,\cC'} \| f_+)  - \cH(p_{\cC'} \| f_+) \le \frac{\beta}2\Big)\\
&  \le \Pr\Big(\cH(p_{v,\cC'} \| f_+)  \le \beta \text{ or }\cH(p_{\cC'} \| f_+)  \ge \frac{\beta}2\Big)
\le  \Pr\Big(\cH(p_{v,\cC'} \| f_+)  \le \beta\Big) + \Pr\Big(\cH(p_{\cC'} \| f_+)  \ge \frac{\beta}2\Big).
\end{align*}
Similarly,
\begin{align*}
&\Pr\Big(\cH(p_{v,\cC} \| p_{\cC})  \ge \frac{\beta}{2} \Big) \le \Pr\Big(\cH(p_{v,\cC} \| f_+)  + \cH(p_{\cC} \| f_+)  \ge \frac{\beta}2\Big)\\
&  \le \Pr\Big(\cH(p_{v,\cC} \| f_+)   \ge \frac{\beta}4 \text{ or }  \cH(p_{\cC} \| f_+)  \ge \frac{\beta}4\Big)
\le  \Pr\Big(\cH(p_{v,\cC} \| f_+)   \ge \frac{\beta}4\Big) + \Pr\Big( \cH(p_{\cC} \| f_+)  \ge \frac{\beta}4\Big).
\end{align*}
Now, using Sanov's theorem (Lemma \ref{thm:sanov}), we have,
$$
 \Pr\Big(\cH(p_{v,\cC'} \| f_+)  \le \beta\Big) \le (M+1)^q \exp(-M \underset{p: \cH(p \| f_+)  \le \beta}\min D(p\|f_-)).
$$
At the optimizing $p$ of the exponent, 
\begin{align*}
D(p\|f_-)& \ge 2\cH^2(p\|f_-) & \text{relation between Hellinger and KL  \cite{sason2016f}} \\
& \ge 2(\cH(f_+\|f_-) - \cH(p\|f_+))^2 & \text{ from using triangle inequality} \\
& \ge 2(2\beta - \beta)^2 & \text{ from noting the value of $\beta$}\\ 
&= \frac{\cH^2(f_+\|f_-)}{2}. 
\end{align*}

Again, using Sanov's theorem (Lemma \ref{thm:sanov}), we have,
$$
\Pr\Big(\cH(p_{\cC'} \| f_+)  \ge \frac{\beta}2\Big) \le (M+1)^q \exp(-M \underset{p: \cH(p \| f_+)  \ge \frac{\beta}{2}}\min D(p\|f_+)).
$$
At the optimizing $p$ of the exponent, 
\begin{align*}
D(p\|f_+)& \ge 2\cH^2(p \|f_+) & \text{relation between Hellinger and KL divergences \cite{sason2016f}} \\
& \ge \frac{\beta^2}{2} & \text{ from noting the value of $\beta$}\\ 
&= \frac{\cH^2(f_+\|f_-)}{8}.\\
\end{align*}

Now substituting this in the exponent,
 using the value of $M$ and doing the same exercise for the other two probabilities we get the claim of the lemma.
\end{proof}

\begin{proof}[Proof of Theorem \ref{thm:div}, Las Vegas Algorithm] First, The algorithm never includes a vertex in a cluster without querying it with at least one member of that cluster.  Therefore, the clusters constructed by our algorithm are always proper subsets of the original clusters.  Moreover, the algorithm never creates a new cluster with a vertex $v$ before first querying it with all the existing clusters. Hence, it is not possible that two clusters produced by our algorithm can be merged.

Let $\calC_1,\calC_2,...,\calC_l$ be the current non-empty clusters that are formed by Algorithm \ref{algo:cc-exact}, for some $l \leq k$. Note that Algorithm \ref{algo:cc-exact} does not know $k$. Let without loss of generality $|\calC_1| \geq |\calC_2| \geq ...\geq |\calC_l|$. Let there exists an index $i \leq l$ such that $|\calC_1| \geq |\calC_2| \ge \dots\geq |\calC_{i}| \geq M$,  where 
$M = \frac{32 \log n}{ \cH^2(f_+\|f_-)}$. 
Of course, the algorithm does not know either $i$ or $M$. If even $|\calC_1| < M$, then $i=0$. Suppose $j'$ is the minimum index such that there exists a vertex $v$ with highest average membership in $\calC_{j'}$. There are few cases to consider based on $j' \leq i$, or $j' > i$ and the cluster  that truly contains $v$.

{\it Case 1. $v$ truly belongs to $\calC_{j'}$.} In that case, we just make one query between $v$ and an existing member of $\calC_{j'}$ and the first query is successful.

{\it Case 2. $j' \leq i$ and $v$ belongs to $\calC_j, j \neq j'$ for some $j \in \{1,\dots, i\}$.} 
 Here we have {\sf  Membership}($v, \cC_{j'}$) $ \geq $ {\sf  Membership}($v, \cC_{j}$). Since both $\calC_j$ and $\calC_{j'}$ have at least $M$ current members, then using
 Lemma \ref{lem:unknown}, this happens with probability at most $\frac{2}{n^3}$. Therefore, the 
  number of queries involving $v$ before its membership gets determined is $\leq 1$ with probability at least $1- \frac{2k}{n^3}$.

{\it Case 4. $v$ belongs to $\calC_j, j \neq j'$ for some $j > i$.} In this case the algorithm may make $k$ queries involving $v$ before its membership gets determined.

{\it Case 5. $j' > i$, and $v$ belongs to $\calC_j$ for some $j \leq i$.} 
In this case, there exists no $v$ with its highest membership in $\calC_1,\calC_2,...,\calC_i$.

Suppose $\calC_1,\calC_2,...,\calC_{j'}$ are contained in groups $H_1,H_2,...,H_s$ where $s \leq \lceil \log{n}\rceil$. Let $\calC_j \in H_t$, $t \in [1,s]$. Therefore, $|\calC_j| \in [\frac{|\calC_1|}{2^{t-1}}, \frac{|\calC_1|}{2^{t}}]$. If $|\calC_j| \geq 2M$, then all the clusters in group $H_t$ have size at least $M$. 
Now with probability at least $1-\frac{2}{n^2}$,  {\sf  Membership}($v, \cC_j$) $ \geq $ {\sf  Membership}($v, \cC_{j''}$) for every cluster $\calC_{j''} \in H_t$. In that case, the membership of $v$ is determined within at most $\lceil \log{n}\rceil$ queries. Else, with probability at most $\frac{2}{n^2}$, there may be $k$ queries to determine the membership of $v$.


Therefore, once a cluster has grown to size $2M$, the number of queries to resolve the membership of any vertex in those clusters is at most $\lceil \log{n} \rceil$ with probability at least $1-\frac{2}{n^2}$. Hence, for at most $2kM$ elements, the number of queries made to resolve their membership can be $k$. Thus the total number of queries made by Algorithm \ref{algo:cc-exact} is 
$O(n\log{n}+Mk^2)=O(n\log{n}+\frac{k^2\log{n}}{ \cH^2(f_+\|f_-)})$ with probability $1-o_n(1)$. 
\end{proof}

\begin{remark}
While, for the more general setting with unknown $f_{i,j}$s (distribution referring to similarity of cluster $i$ and $j$), we do not know how to extend this algorithm yet,
 if the parameters were known it is possible to extend our algorithm to such setting. We can calculate $M_i=O(\frac{\log{n}}{ \min_{j: j \neq i}\cH^2(f_{i,i}\|f_{i,j})}),$ and thus whenever the $i$ th clusters grows to size $M_i$, remainder of its members can be inferred.  
\end{remark}
Since, we handle very generic distributions, our upper bounds are off by a factor of $O(\log{n})$ from the lower bound. Tightening this bound, e.g. for sparse SBM to match the conjectured trade-off between queries and threshold remains an important open question.

~\\
\paragraph*{\bf Discussion.} This is the first rigorous theoretical study of interactive clustering with side information, and it unveils many interesting directions for future study of both theoretical and practical significance (see Appendix~\ref{sec:future} for more details). Having arbitrary $f_+$, $f_-$ significantly generalizes SBM. Also it raises an important question about how SBM recovery threshold changes with queries. For sparse region of SBM, where $f_+$ is ${\rm Bernoulli}(\frac{a'\log{n}}{n})$ and $f_-$ is ${\rm Bernoulli}(\frac{b'\log{n}}{n})$, $a' > b'$, Lemma \ref{lemma:funda} is not tight yet. However, it shows the following trend.  By setting $a=\frac{n}{k}$ and ignoring the lower order terms and a $\sqrt{\log{n}}$ factor, recovery error becomes $\approx (1-\frac{Q}{nk})-\frac{\cH(a'\|b')}{\sqrt{k}}$. We conjecture with $Q$ queries, the sharp recovery threshold of sparse SBM changes from $(\sqrt{a'}-\sqrt{b'}) \geq  \sqrt{k}$ to $(\sqrt{a'}-\sqrt{b'}) \geq  \sqrt{k}\left(1-\frac{Q}{nk}\right)$. Proving this bound remains an exciting open question.
 

\bibliographystyle{abbrv}

{
\bibliography{bibfile}
}

  
\appendix

\section{Zero Query and the Stochastic Block Model}\label{sec:zero}
\label{sec:SBM-general}

Consider the case when we allow zero query to the oracle. The clustering has to be done just by using the side information matrix. 
This is a direct generalization to the well-known stochastic block model. Indeed, if $f_+$ is Bernoulli($p$) and  $f_- $ is Bernoulli($q$), 
then the side information matrix is a binary matrix, as in the case of stochastic block model \cite{abh:16,hajek2016achieving,chin2015stochastic,mossel2015consistency}.

It is clear that if the clustering input instance is adversarial, then it is impossible to recover the clusters with high probability. For example, think of the situation that 
$k-1$ clusters are of size $1$ each. In that case, one of these $k-1$ small cluster points cannot be assigned to the correct cluster without querying, with a positive probability.
Note that, we will not be able to have such an argument later when querying is allowed, which makes that case significantly difficult.

Let us look at the scenario, when there are $k$ clusters of size $\frac{n}{k}$ each. Suppose $V = \sqcup_{i=1}^k V_i$ is the correct clustering. Consider the 
different clustering instances, that can be derived from the correct clustering, by swapping any two points $a\in V_i$ and $b\in V_j$, $i \ne j$. There are $K = \binom{k}{2}\frac{n^2}{k^2} = \frac{n^2}{2}(1-1/k)$ such different clusterings (partitions) possible. Let us consider these $K$ different cases as $K$ hypotheses, and try to identify which one of them is true based on the side information matrix.

Let $Q_t, t =1, \dots, K$ be the joint probability distributions of the side information matrix under hypothesis $t, t =1, \dots , K$. Also, let the correct clustering be the zeroth hypothesis and induces a joint probability distribution $Q_0$. 

In this type of multi-hypothesis testing problem, a standard tool to lower bound probability of error is Fano's inequality. However, Fano's inequality in its usual form in hypothesis testing (see,  \cite[Thm.~7]{han1994generalizing}) does not give the tightest possible result in our case. We instead use another form of Fano's inequality 
 from \cite[Thm.~II.1 Eq. (5)]{guntuboyina2011lower} - therein taking $Q= Q_0$ and taking $f(x) = x\log x$, we have,
  the probability of error $ P_e$ of
this hypothesis testing problem (to identify between the $K$ hypotheses) given by,
$$
\frac{1}{K}\sum_i D(Q_i \| Q_0) \ge (1-P_e) \log (K(1-P_e)) + P_e \log (KP_e/ (K-1))
$$
 where $D(f\| g)$ is the Kullback-Leibler (KL) divergence. The KL divergence between joint distribution of independent random variables is sum of the KL divergence of the marginals, and the only times when the distributions of $w_{i,j}$ differs under $Q_i$ and under $Q_0$ is when $i$ or $j$ belong to the two clusters where elements were swapped. There are $\frac{4n}{k}$ such instances, among them $\frac{2n}k$ contributes $D(f_+\|f_-)$ to the sum and $\frac{2n}k$ contributes $D(f_-\|f_+)$ to the sum.
 Therefore we obtain,
\begin{align*}
P_e &\ge 1- \frac{\frac{1}{K}\sum_i D(Q_i \| Q_0)+ \log 2}{ \log K}
 \ge 1 - \frac{\frac{2n}{k}\Delta(f_+ \| f_1)}{\log \frac{n^2}{2}(1-1/k) } 
 \approx 1 - \frac{n \Delta(f_+ \| f_1) }{k \log n},
\end{align*}
where $\Delta(f\|g) \equiv D(f\|g)+D(f\|g)$ .




One particular regime of interest in the literature of stochastic block model appear (see,  \cite{DBLP:conf/focs/AbbeS15,mossel2015consistency}) when, $f_+ \sim \text{Bernoulli}\Big(\frac{a \log n}{n}\Big)$ and $f_- \sim \text{Bernoulli}\Big(\frac{b \log n}{n}\Big)$.
Then 
$
D(f_+\|f_-) = \frac{a \log n}{n} \log \frac{a}{b} + \Big(1-  \frac{a \log n}{n} \Big) \log \frac{1-  \frac{a \log n}{n}  }{1-  \frac{b \log n}{n} }
$
and
$
\Delta(f_+\| f_-) = (a-b) \frac{\log n}{n} \Big(\log\frac{a}{b} - \log \frac{1-  \frac{a \log n}{n}  }{1-  \frac{b \log n}{n} } \Big) \approx  \frac{\log n}{n} \cdot (a-b)\log \frac{a}{b}.
$
In this case, 
$
P_e \ge 1 - \frac{a-b}{k}\log \frac{a}{b},
$
and, $P_e >0$ as long as $(a-b)\log \frac{a}{b} <k$. This lower bound can be improved by considering generalized versions of Fano's inequality
involving Hellinger divergence.

In particular, by constructing a different hypothesis testing scenario and using  a generalized version of Fano's inequality we can obtain  the following result on probability $P_e$ of erroneous clustering.
In particular, we can use a generalized version of Fano's inequality due to Polyanskiy and V\'erdu \cite[Thm.~4]{polyanskiy2010arimoto}.
Consider the following different hypothesis testing situation. Suppose $k$ divides $n$, and there are $k$ equally sized subsets that partition the set of elements $[n] = V_1 \sqcup V_2 \sqcup \dots \sqcup V_k$. Let $v\in V_1$ be a fixed element. Take any cluster $V_j, j \ne 1$. For all elements $u_1, \dots, u_{n/k} \in V_j$, we obtain   $K = n/k$ different hypotheses by interchanging $v$ with $u_i, i =1, \dots, n/k$.  We consider the probability of error of this hypothesis testing problem. 
Inparticular,  \cite[Thm.~4]{polyanskiy2010arimoto}, says that the probability of error $P_e$ is given by (considering Renyi divergence of order $\frac12$),  
\begin{align*}
- 2\log \Big(\sqrt{\frac{1-P_e}K} + \sqrt{P_e(1-\frac1K)}\Big)  \le - \log \sum_y (\frac{1}{K} \sum_{j=1}^K \sqrt{Q_j(y)})^2
\end{align*}
which implies for us,
\begin{align*}
 \Big(&\sqrt{\frac{1-P_e}K} + \sqrt{P_e(1-\frac1K)}\Big)^2  \ge \frac{1}{K^2}  \sum_{j}\sum_i \sum_y  \sqrt{Q_j(y)Q_i(y)}
  =  \frac{1}{K^2}  \sum_{j}\sum_i \Big(1- \cH^2(Q_i \| Q_j)\Big)\\
 & = 1 - \cH^2(Q_i \| Q_j) = 1 -\Big(1-(1-\cH^2(f_+\|f_-))^{\frac{4n}{k}}\Big) = (1-\cH^2(f_+\|f_-))^{\frac{4n}{k}},
 \end{align*}
 where we had  to crucially used the following fact: if $P_1^m$  and $Q_1^m$ denote  joint distributions of $m$ of independent  $P_i$ and independent $Q_i, i =1, \dots, m$ random variables, then,
 \begin{align*}
 \cH^2(P_1^m \| Q_1^m) &= 1- \int_{x_1, \dots, x_m} \sqrt{P_1^m(x_1, \dots, x_m)Q_1^m(x_1, \dots, x_m)} dx_1, \dots, dx_m\\
 & = 1- \prod_{i=1}^m \int_{x}\sqrt{P_i(x)Q_i(x)} dx & \text{using Tonelli's theorem}\\
 & = 1- \prod_{i=1}^m(1-\cH^2(P_i\|Q_i))  \le \sum_{i=1}^m \cH^2(P_i\|Q_i).
 \end{align*}
 
  Again, we assume $f_+ \sim \text{Bernoulli}\Big(\frac{a \log n}{n}\Big)$ and $f_- \sim \text{Bernoulli}\Big(\frac{b \log n}{n}\Big)$. In this case,
 \begin{align*} 
 \sqrt{\frac{k}{n}} +\sqrt{P_e} & \ge \Big(\sqrt{ab}\frac{\log n}{n} + \sqrt{(1-\frac{a\log n}{n})(1-\frac{b \log n}{n})}\Big)^{\frac{2n}{k}}= \Big(1- \Big(\frac{a+b}{2}-\sqrt{ab}- \frac{ab \log n}{n}\Big)\frac{\log n}{n}  \Big)^{\frac{2n}{k}}\\
 & \approx e^{- \Big(\frac{a+b}{2}-\sqrt{ab}- \frac{ab \log n}{n}\Big)\frac{2\log n}{k}} = n^{-\Big(\frac{a+b}{2}-\sqrt{ab}- \frac{ab \log n}{n}\Big)\frac{2}{k}}.
\end{align*}

This implies,
$
\sqrt{P_e} \ge n^{-\Big(\frac{a+b}{2}-\sqrt{ab}\Big)\frac{2}{k}} - \sqrt{k} n^{-1/2}
$
In particular, if 
$
\Big(\frac{a+b}{2}-\sqrt{ab}\Big)\frac{2}{k} <\frac12,
$
then $P_e > 0$. Hence,  $P_e > 0$ if 
$$
\sqrt{a} - \sqrt{b} <\sqrt{\frac{k}2}.
$$

 While in this regime, this result is slightly suboptimal compared to the lower bound of \cite{DBLP:conf/focs/AbbeS15}, where the corresponding bound was $\sqrt{a} - \sqrt{b} <\sqrt{k}$, note that our bound works for arbitrary $f_+, f_-$ and across all regimes; moreover we have not tried to optimize the constants here.



%


\section{Connections \& Future Direction}
\label{sec:future}
This is the first work that rigorously study the query complexity of clustering with side information. We introduce new general information theoretic methods; as well as use, information theoretic inequalities to design efficient algorithms for clustering with near-optimal complexity. Our algorithms are entirely parameter free, and are computationally efficient. This work reveals interesting connection to the well-studied model of the stochastic block model and, generalize them in a significant way by considering arbitrary distribution for noise opposed to only Bernoulli noise, and opens up new direction of study in the general area of clustering and community detection.

Even for the zero-query case, using  generalized Fano's inequality in multiple hypothesis testing, we can derive simple lower bounds for SBM with arbitrary $f_+, f_-$ and cluster size distribution, matching closely the bounds for the sparse region $f_+\sim \text{Bernoulli}(\frac{a\log{n}}{n})$ and $f_-\sim \text{Bernoulli}(\frac{b\log{n}}{n})$ and cluster size $\sim \frac{n}{k}$. Extending this lower bound to consider adaptive querying comes as a major challenge, as querying may reveal different deterministic information under different hypothesis. We propose a general framework for deriving such lower bounds, and in the process it reveals an interesting trend on how the threshold of recovery should change with querying: from $\sqrt{a}-\sqrt{b} \geq \sqrt{k}$ to $\sqrt{a}-\sqrt{b} \geq \sqrt{k}\left(1-\frac{Q}{nk}\right)$ (see Lemma~\ref{lemma:funda}). That is querying can help reduce the threshold when $O(n)$ edges have been queried as $k$ is a constant. Currently, there is a $\sqrt{\log{n}}$ gap to achieve this bound as our lower bounds deal with very generic distributions and cluster sizes.  Closing this gap for the stochastic block model with querying remains an interesting open question.

 We propose two computationally efficient algorithms that match the query complexity lower bound within $\log{n}$ factor and are completely parameter free. In particular, our iterative-update method to design Monte-Carlo algorithm provides a general recipe to develop any parameter-free algorithm, which are of extreme practical importance. The convergence result is established by extending Sanov's theorem from the large deviation theory which gives bound only in terms of KL-divergence. 
  Due to the generality of the distributions, the only tool we could use is Sanov's theorem. 
 However, Hellinger distance comes out to be the right measure both for lower and upper bounds. If $f_+$ and $f_-$ are common distributions like Gaussian, Bernoulli etc., then other concentration results stronger than Sanov may be applied to improve the constants and a logarithm factor to show the trade-off between queries and thresholds as in sparse SBM. While some of our results apply to general $f_{i,j}$s, a full picture with arbitrary $f_{i,j}$s and closing the gap of $\log{n}$ between the lower and upper bound remain an important future direction.
 
 There is also a very recent result by \cite{ashtiani2016clustering} that studies the specific $k$-means clustering problem with a different side information model. 
 While the setting is quite different, we believe their results can be significantly improved (for example, they only show a lower bound of $\Omega(\log{k}+\log{n})$ to overcome NP-hardness of the problem) using our general methods - which promises to be an interesting future work.
\end{document}